\newcommand{\bE}{\mathbb{E}}
\newcommand{\bR}{\mathbb{R}}
\newcommand{\bN}{\mathbb{N}}
\newcommand{\sfN}{\mathsf{N}}
\newtheorem{theorem}{Theorem}[section]
\newtheorem{proposition}[theorem]{Proposition}
\newtheorem{definition}[theorem]{Definition}
\newtheorem{example}[theorem]{Example}
\theoremstyle{remark}
\newtheorem{remark}[theorem]{Remark}
\numberwithin{equation}{section}
\newenvironment{newremark}[1]{%
    \begin{remark}#1}{%
    \Endofdef\end{remark}%
}
\newcommand{\xqed}[1]{%
    \leavevmode\unskip\penalty9999 \hbox{}\nobreak\hfill
    \quad\hbox{\ensuremath{#1}}}
\newcommand{\Endofdef}{\xqed{\lozenge}}
\title[Lipschitz-Guided Design of Generative Models]{Lipschitz-Guided Design of Interpolation Schedules in Generative Models}
\author{Yifan Chen\textsuperscript{1}}
\address{\textsuperscript{1}Department of Mathematics, University of California, Los Angeles, CA, USA}
\email{yifanchen@math.ucla.edu}
\author{Eric Vanden-Eijnden\textsuperscript{2,3}}
\author{Jiawei Xu\textsuperscript{3,4}}
\email{eve2@nyu.edu, jxu0818@umd.edu}
\address{\textsuperscript{2}Machine Learning Lab, Capital Fund Management, Paris, France}
\address{\textsuperscript{3}Courant Institute, New York University, NY, USA}
\address{\textsuperscript{4}Now at University of Maryland, College Park, MD, USA}
\date{}
\begin{document}
\setcounter{tocdepth}{1}
\maketitle
\vspace{-2em}
\begin{abstract}
We study the design of interpolation schedules in flow and diffusion-based generative models from both statistical and numerical perspectives.
Within the stochastic interpolants framework, we first show that scalar interpolation schedules are statistically equivalent under the Kullback--Leibler divergence in path space, after optimal a posteriori tuning of the diffusion coefficient.
This equivalence motivates focusing on numerical properties of the drift field rather than purely statistical criteria.
We propose minimizing the averaged squared Lipschitzness of the drift as a principled criterion for schedule design, in contrast with kinetic-energy minimization in optimal transport.
A simple transfer formula expresses the drift of one schedule in terms of the drift of another, allowing the designed schedule to be used at inference time with a model trained under a different (e.g., linear) schedule, without retraining.
We work out the optimal schedules analytically for Gaussian and Gaussian-mixture targets: for Gaussians, we obtain exponential improvements in the Lipschitz constant over linear schedules; for Gaussian mixtures, we obtain schedules that mitigate mode collapse in few-step sampling.
We then validate the approach on high-dimensional invariant measures of stochastic Allen--Cahn and Navier--Stokes equations, where the designed schedule yields markedly more accurate fine-scale statistics at fixed integrator budget.
\end{abstract}
\tableofcontents
\section{Introduction}
\subsection{Context}



Dynamics between probability measures, particularly flows and diffusion processes described by ordinary and stochastic differential equations (ODEs and SDEs), underpin many modern generative modeling techniques~\cite{sohl2015deep,ho2020denoising,song2020score}.
These models generate samples by progressively transforming simple noise into structured data through a sequence of intermediate distributions, implemented as an iterative denoising or refinement process~\cite{song2019generative,karras2022elucidating}.
The manner in which these transformations are traversed in time—encoded by interpolation or noise schedules—plays a crucial role in shaping the resulting generative dynamics and their performance.

In this paper, we study the mathematical design of interpolation schedules for flow and diffusion-based generative models within the stochastic interpolants framework~\cite{albergo2023stochastic,albergo2022building}.
This framework provides a unified formulation for noising and denoising processes based on sample interpolation, and enables principled construction of the associated generative dynamics.
It is closely related to concurrent developments such as flow matching~\cite{lipman2022flow} and rectified flows~\cite{liu2022flow}, and encompasses diffusion and score-based generative models~\cite{sohl2015deep,song2020score,ho2020denoising,song2020improved} as special cases.

\subsection{Basics of stochastic interpolants}
\label{sec-intro-stochastic-interpolants}
Let $x_1 \sim \mu^*$, where $\mu^*$ is a target probability supported on $\mathbb{R}^d$ satisfying $\mathbb{E}[\|x_1\|_2^2] = \int_{\mathbb{R}^d}  \|x\|_2^2 \mu^*({\rm d}x) < \infty$. The linear stochastic interpolant with scalar schedule is the stochastic process $I_t = \alpha_t z + \beta_t x_1$, where $z \sim \sfN(0,\mathrm{I})$  is multivariate normal distributed with $z \perp x_1$. Here $\alpha_t, \beta_t \in C^1([0,1])$ are scalar functions of $t$ satisfying the boundary conditions $\alpha_0 = \beta_1 = 1$ and $\alpha_1 = \beta_0 = 0$, so that $I_0 = z$ and $I_1 = x_1$. 

For different values of $t$, the interpolant $I_t$ can be seen as modeling a corruption of the target at a specific scale. The theory of stochastic interpolants \cite{albergo2022building,albergo2023stochastic} shows that one can generate samples from $\mu^*$ by solving the following ODE:
\begin{align}
\label{eqn-generative-ODE}
\mathrm{d}X_t = b_t(X_t)\mathrm{d}t, \quad X_0 \sim \sfN(0,\mathrm{I})\, ,
\end{align}
where $b_t(x) = \mathbb{E}[ \dot{I}_t | I_t = x]$ and $\dot{I}_t$ denotes the time derivative of $I_t$. The solution satisfies $\mathrm{Law}(X_t) = \mathrm{Law}(I_t)$, and in particular, $X_1 \sim \mu^*$. This can also be seen as a consequence of the mimicking theorem \cite{gyongy1986mimicking}, also referred to as Markovian projection.

Because the drift $b_t$ is a conditional expectation, we can define it as the minimizer of the square loss function
\begin{equation*}
    L(\hat{b}) = \int_0^1 \bE[\|\hat{b}_t(I_t) - \dot I_t\|_2^2]\, {\rm d}t\, .
\end{equation*}
By parametrizing $\hat{b}$ in an expressive class, using e.g. a deep neural network, and optimizing the loss function (with expectation over empirical samples), we obtain an approximation $\hat{b} \approx b$. 
This allows us to solve \eqref{eqn-generative-ODE} with $\hat{b}_t$ to generate samples. More technical details and variants for SDEs and the \textit{a posteriori} tuning of diffusion coefficients are presented in Section \ref{sec-Stochastic Interpolants}.

\subsection{This work}
Since $\hat{b}_t$ is learned from samples and generation requires numerical integration of a differential equation, a natural question arises: does there exist a particular choice of $\alpha_t, \beta_t$ that can enhance both statistical and numerical efficiency? This paper establishes design principles for addressing this question. Specifically, our contributions are as follows:
\begin{itemize}
    \item In Section~\ref{sec-Statistical Equivalence under Kullback-Leibler in Path Space}, we prove that under the Kullback-Leibler divergence criterion in path space, different choices of scalar schedules are \textit{statistically equivalent} when diffusion coefficients are optimized a posteriori. This equivalence fundamentally renders statistical considerations insufficient for schedule selection.
    
    \item In Sections~\ref{sec-From one schedule to another} and~\ref{sec-Optimizing averaged squared Lipschitzness}, we propose minimizing the averaged squared Lipschitzness of the drift $b_t$ as a principled criterion for schedule design. A \emph{transfer formula} (Proposition~\ref{prop-from-one-schedule-to-another}) expresses the drift of one scalar schedule in terms of the drift of another, so the designed schedule can be used at inference time with a model trained under a different schedule, without retraining.

    \item In Sections~\ref{sec-1dexamplegaussian}--\ref{sec-High dimensional examples}, we work out the optimal schedules analytically for Gaussian and Gaussian-mixture targets. For Gaussians, the designed schedule yields \emph{exponential} improvements in the Lipschitz constant of the drift; for Gaussian mixtures, it mitigates mode collapse under few-step sampling. We also extend the construction to log-concave and general distributions.

    \item In Section~\ref{sec-numerical-demo}, we validate the design on high-dimensional Gaussian fields and mixtures, and on invariant measures of stochastic Allen--Cahn and Navier--Stokes equations. The designed schedule, applied at inference time via the transfer formula, produces more accurate enstrophy spectra at fixed integrator budget.
\end{itemize}
\subsection{Related work}

Since the introduction of flow and diffusion-based generative models, a large body of work has explored their design principles and parameter choices; see the survey \cite{yang2023diffusion}.
These studies address, among other aspects, the choice of noise distributions, noising and denoising processes, time-reversal dynamics, training objectives, and diffusion coefficients.
The present work focuses on the design of interpolation schedules within the unit-time stochastic interpolants framework \cite{albergo2022building,albergo2023stochastic}, which relates to concurrent developments in flow matching \cite{lipman2022flow} and rectified flows \cite{liu2022flow}, and encompasses diffusion and score-based generative models \cite{sohl2015deep,song2020score,ho2020denoising,song2020improved} as special cases.
Within this framework, interpolation schedules play a role analogous to noise schedules in diffusion models.

Schedule design has been studied from both statistical and numerical perspectives.
From a statistical standpoint, \cite{kingma2021variational} showed that different noise schedules in diffusion models yield the same variational lower bound.
Our results indicate that this notion of \emph{statistical equivalence} extends to a broader setting within the unit-time stochastic interpolants framework when the Kullback--Leibler divergence in path space is used as the estimation criterion; see Remark~\ref{remark-stats-equivalence}.
This observation suggests that statistical considerations alone are insufficient to distinguish among scalar interpolation schedules.

From a numerical perspective, most existing work has relied on empirical studies—primarily on machine learning benchmarks—to tune noise or time schedules for improved sampling efficiency \cite{san2021noise,jolicoeur2021gotta,nichol2021improved,song2020improved,karras2022elucidating}.
Related approaches learn improved schedules or time parametrizations using additional training \cite{shaul2024bespoke,xue2024accelerating,sabouralign,chentrajectory}, while \cite{wang2024evaluating} analyzes the sensitivity of schedule choice to score estimation errors.
Here we propose a principled approach to numerical schedule design based on optimizing the Lipschitz regularity of the drift field at inference time, without requiring retraining.

Related mathematical work has investigated Lipschitz regularity, contractivity, and stability properties of flows and flow maps \cite{daniels2025contractivity,tsimpos2025optimal}.
In particular, \cite{tsimpos2025optimal} considers a variational problem for minimizing the maximum Lipschitz constant under reparametrization of a fixed flow, a setting which differs from ours.
See also \cite{aranguri2025optimizing} for a mathematical study of how schedule design affects mode identification in high-dimensional distributions.

Another closely related line of work advocates learning optimal transport paths \cite{liu2022flow,pooladian2023multisample}, which are straight and therefore appealing from a numerical standpoint; related models based on entropy-regularized optimal transport include Schr\"odinger bridges \cite{de2021diffusion,shi2023diffusion,pooladian2025plug}.
However, optimal transport paths may give rise to highly irregular drift fields \cite{tsimpos2025optimal}, which are not well suited for numerical integration (see also Remark~\ref{remark-OT-1D-Gaussian}).
Moreover, since the estimation of the initial drift $b_0$ inevitably contains errors, one-step or few-step generation strategies that rely heavily on straightness may struggle to accurately reproduce fine-scale features.
Our proposed Lipschitz-based criterion is instead designed to directly target numerical integration efficiency and robustness to discretization.

Finally, numerical efficiency can also be improved through advances orthogonal to schedule design, including higher-order, exponential, and parallel integrators \cite{dockhorn2022genie,lu2022dpm,zhang2022fast,li2024accelerating,chen2024accelerating,wu2024stochastic,de2025accelerated,tan2025stork}, as well as multiscale and cascading approaches \cite{yu2020wavelet,dhariwal2021diffusion,jing2022subspace,saharia2022image,ho2022cascaded,guth2022wavelet,phung2023wavelet}.
In addition, consistency models and approaches based on learning flow maps \cite{song2023consistency,kim2024consistency,salimans2022progressive,frans2024one,boffi2025flow} aim to reduce the number of sampling steps altogether.
These approaches are complementary to schedule design and can be combined with the methods studied in this work.

We note that interpolation schedules can also substantially influence training stability and efficiency.
The present paper focuses primarily on the statistical and numerical efficiency of schedule design.

\section{Statistical Equivalence under Kullback-Leibler in Path Space}
\label{sec-Statistical Equivalence under Kullback-Leibler in Path Space}

In this section, we discuss the statistical properties of different interpolation schedules, using the Kullback-Leibler (KL) divergence in path space as the criterion. The focus is on formal derivations and calculations, and the goal is to reveal the underlying structures rather than provide a fully rigorous treatment, which would require delicate discussions on the regularity of the SDEs.
\subsection{Stochastic interpolants}
\label{sec-Stochastic Interpolants}
Here we briefly recall the main results of the stochastic interpolant framework \cite{albergo2022building,albergo2023stochastic}. For completeness, we also include a simple sketch of derivations in Appendix~\ref{appendix:derivation-stochastic-interpolants}.

As in Section~\ref{sec-intro-stochastic-interpolants}, we denote the target distribution by $\mu^*$, and assume that it is supported on $\mathbb{R}^d$ and satisfies $\mathbb{E}[\|x_1\|_2^2] < \infty$. For simplicity we also assume that $\mu^*$ is absolutely continuous with respect to the Lebesgue measure and has a smooth density.  
\begin{definition}
    The linear stochastic interpolant between $x_1 \sim \mu^*$ and the Gaussian noise $z \sim \sfN(0,\mathrm{I})$ with $z \perp x_1 $ is the process
    \begin{equation}
    \label{eqn-linear-interpolant}
         I_t = \alpha_t z + \beta_t x_1, \quad 0 \leq t \leq 1\, .
    \end{equation}
    where $\alpha_t, \beta_t \in C^1([0,1])$ are  scalar interpolation schedules satisfying  the boundary conditions $\alpha_0 = \beta_1 = 1$ and $\alpha_1 = \beta_0 = 0$ as well as $\dot{\beta}_t > 0, \dot{\alpha}_t < 0$ for $t\in (0,1)$.
\end{definition}

The law of the stochastic interpolant coincide with the law of the solution of an ODE with a drift given by a conditional expectation: 
\begin{proposition}
\label{prop-si-bt}
Let $b_t(x) = \bE[\dot{I}_t|I_t = x]$. Then the solutions to the ODE \[{\rm d}X_t = b_t(X_t){\rm d}t, \quad X_0 \sim \sfN(0,\mathrm{I})\, ,\]
satisfy $\mathrm{Law}(X_t) = \mathrm{Law}(I_t)$ for all $t\in[0,1]$, and in particular, $X_1 \sim \mu^*$.
\end{proposition}

Using the Fokker-Planck equation and the fact that $\nabla \cdot (\rho \nabla \log \rho) = \Delta \rho$, we can also construct a family of SDEs that share the same law at each time as the interpolation process $I_t$:
\begin{proposition}
\label{prop-si-tune-diffusion-coefs}
Let $b_t(x) = \bE[\dot{I}_t|I_t = x]$ and assume the density of $I_t$, denoted by $\rho_t$, exists and is $C^1$ in space. Then for any $\epsilon_t\ge0$, the solutions to the SDE
    \[{\rm d}X_t = \left(b_t(X_t)+\epsilon_t \nabla \log \rho_t(X_t)\right){\rm d}t +\sqrt{2\epsilon_t}{\rm d}W_t, \quad X_0 \sim \sfN(0,\mathrm{I})\, .\]
    satisfy $\mathrm{Law}(X_t) = \mathrm{Law}(I_t)$ for all $t\in[0,1]$, and in particular, $X_1 \sim \mu^*$.
\end{proposition}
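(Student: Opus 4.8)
The plan is to check that the time-marginal density $\rho_t = \mathrm{Law}(I_t)$ solves the Fokker--Planck equation associated with the proposed SDE, and then to conclude, by uniqueness for that linear parabolic equation, that any solution of the SDE started from $\mathsf{N}(0,\mathrm{I})$ must have $\rho_t$ as its law at time $t$. The only input needed is the transport (continuity) equation for $\rho_t$, which is already implicit in Proposition~\ref{prop-si-bt}: since the ODE $\mathrm{d}X_t = b_t(X_t)\,\mathrm{d}t$ transports $\mathsf{N}(0,\mathrm{I})$ onto $\rho_t$, the curve $t \mapsto \rho_t$ satisfies, in the distributional sense,
\begin{equation*}
  \partial_t \rho_t + \nabla \cdot (b_t \rho_t) = 0, \qquad \rho_0 = \mathsf{N}(0,\mathrm{I})\, .
\end{equation*}

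Next I would write down the Fokker--Planck equation for $\mathrm{d}X_t = c_t(X_t)\,\mathrm{d}t + \sqrt{2\epsilon_t}\,\mathrm{d}W_t$, namely $\partial_t p_t = -\nabla\cdot(c_t p_t) + \epsilon_t \Delta p_t$, and substitute the candidate drift $c_t = b_t + \epsilon_t \nabla \log \rho_t$ together with the ansatz $p_t = \rho_t$. Using $\rho_t \nabla \log \rho_t = \nabla \rho_t$ (valid wherever $\rho_t > 0$ and $\rho_t \in C^1$), one gets $\nabla\cdot(\rho_t \nabla \log \rho_t) = \Delta \rho_t$, so that the two $\epsilon_t$-terms cancel and the right-hand side collapses to $-\nabla\cdot(b_t \rho_t)$, which equals $\partial_t \rho_t$ by the continuity equation. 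Hence $\rho_t$ solves the Fokker--Planck equation with initial datum $\mathsf{N}(0,\mathrm{I})$. Conversely, It\^o's formula, tested against smooth compactly supported functions, shows that the law of the SDE solution solves the same Fokker--Planck equation with the same initial condition; uniqueness for the linear equation then forces $\mathrm{Law}(X_t) = \rho_t = \mathrm{Law}(I_t)$ for all $t \in [0,1]$, and in particular $X_1 \sim \mu^*$.

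The difficulty lies not in the algebra above but in the regularity required to make each step rigorous: one must know that the SDE admits a (weak) solution and that the Fokker--Planck equation has a unique solution in a class containing both $\rho_t$ and $\mathrm{Law}(X_t)$. This is delicate because the score $\nabla \log \rho_t$ can be badly behaved near the endpoints $t = 0, 1$, and because $\epsilon_t$ is only assumed nonnegative; an honest treatment would impose, e.g., local Lipschitz and linear-growth control on $b_t + \epsilon_t \nabla \log \rho_t$, or invoke a superposition/uniqueness principle in the spirit of Ambrosio--Figalli--Trevisan linking martingale solutions of the SDE to solutions of the Fokker--Planck equation. Consistent with the stated aim of the section, I would present the argument at the formal PDE level and flag these points rather than resolve them.
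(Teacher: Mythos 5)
Your argument matches the paper's own derivation: both rewrite the continuity equation $\partial_t\rho_t + \nabla\cdot(b_t\rho_t)=0$ inherited from Proposition~\ref{prop-si-bt}, apply the identity $\nabla\cdot(\rho_t\nabla\log\rho_t)=\Delta\rho_t$ so that the $\epsilon_t$ terms assemble into a diffusion, and identify the result as the Fokker--Planck equation of the stated SDE. You additionally spell out the uniqueness/superposition step needed to pass from the PDE identity back to the law of the SDE solution and flag the regularity caveats; the paper's sketch leaves these implicit, consistent with its stated intent to argue formally.
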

By Stein's identity, the score $\nabla \log \rho_t(x)$ can be expressed as:
\begin{equation}
\label{eqn-stein}
    \nabla \log \rho_t(x) = -\frac1{\alpha_t}\bE[z|I_t = x]\, .
\end{equation}
By using 
\begin{equation}
\label{eqn-rel}
\begin{aligned}
x& = \bE[I_t | I_t=x] = \alpha_t \bE[x_0|I_t=x] + \beta_t \bE[x_1|I_t=x]\\
b_t(x) &= \bE[\dot{I}_t|I_t = x] = \dot\alpha_t \bE[x_0|I_t=x] + \dot\beta_t \bE[x_1|I_t=x]\, ,
\end{aligned}
\end{equation}
after some simple algebra we can relate $b_t(x)$ and $\nabla \log \rho_t(x)$ through an affine transformation
\begin{equation}
\label{eqn-b-score}
\begin{aligned}
    b_t(x) &= \frac{\dot\beta_t}{\beta_t}x + \alpha_t^2(\frac{\dot\beta_t}{\beta_t}-\frac{\dot\alpha_t}{\alpha_t})\nabla \log \rho_t(x)\, .
\end{aligned}
\end{equation}
This means that, if we know $b_t$ or an approximation of it, we can use the above relation to obtain the score or an approximation of it directly.

\subsection{Learning the drift from data} We can use empirical risk minimization to learn the conditional expectation $b$ through optimizing the square loss function
\begin{equation*}
    L(\hat{b}) = \int_0^1 \bE[\|\hat{b}_t(I_t) - \dot I_t\|^2_2]\, {\rm d}t\, .
\end{equation*}
In practice, the expectation is over empirical samples. Optimizing it leads to an estimate of $\hat{b}$.

It is also common to optimize for the denoiser $\bE[x_1|I_t = x]$, or the score $\nabla \log \rho_t(x) = -\bE[\frac{z}{\alpha_t}|I_t = x]$ directly. The corresponding loss functions can be similarly constructed since these terms are all expressed as conditional expectations. We note that the three objects can be recovered from each other by affine transformations, using \eqref{eqn-rel} and \eqref{eqn-b-score}. Thus, without loss of generality and for a unified analysis, let us assume that at the end we have an estimator of the score in terms of $\hat{s}_t(x) \approx \nabla \log \rho_t(x)$. This means that the estimated SDE has the form
\[{\rm d}\hat{X}_t = \left(\frac{\dot\beta_t}{\beta_t}\hat{X}_t + (\alpha_t^2(\frac{\dot\beta_t}{\beta_t}-\frac{\dot\alpha_t}{\alpha_t}) + \epsilon_t)\hat{s}_t(x)\right){\rm d}t + \sqrt{2\epsilon_t}{\rm d}W_t , \quad \hat X_0 \sim \sfN(0,\mathrm{I})\, . \]

\subsection{Optimizing the KL in path space}
\label{sec-Optimizing the KL in path space}
Given the flexibility of choosing $\epsilon_t$, it is natural to ask which $\epsilon_t$ is optimal. Let us consider the criterion of the KL divergence between path measures $\mathbb{P}_X$ and $\mathbb{P}_{\hat X}$ of $X = (X_t)_{0\leq t\leq 1}$ and $\hat X= (\hat{X}_t)_{0\leq t\leq 1}$, respectively. According to Girsanov's theorem, this KL divergence has the form
\begin{equation}
    \mathrm{KL}[\mathbb{P}_X\Vert \mathbb{P}_{\hat X}] = \frac{1}{2\epsilon_t}\int_{\bR^d}\int_0^1 \left(\alpha_t^2(\frac{\dot\beta_t}{\beta_t}-\frac{\dot\alpha_t}{\alpha_t}) + \epsilon_t \right)^2\|\nabla \log \rho_t(x) - \hat{s}_t(x)\|^2_2 \rho_t(x){\rm d}t{\rm d}x\, .
\end{equation}
Now, recall the fact that, for any $a$, the minimizer of $\frac{(\epsilon+a)^2}{2\epsilon} = \frac{\epsilon}{2} + a +\frac{a^2}{2\epsilon}$ is $\epsilon = |a|$, and the minimum is $\max\{0, 2a\}$. Thus, the KL achieves minimum when $\epsilon_t = \alpha_t^2(\frac{\dot\beta_t}{\beta_t}-\frac{\dot\alpha_t}{\alpha_t})$.  Viewing this optimized KL as a function of the interpolation schedules $\alpha,\beta$ and denoting it as $\mathrm{KL}^\star(\alpha, \beta)$, it reads
\begin{equation}
\label{eqn-KLopt}
    \mathrm{KL}^\star(\alpha, \beta) = 2\int_{\bR^d} \int_0^1 \alpha_t^2(\frac{\dot\beta_t}{\beta_t}-\frac{\dot\alpha_t}{\alpha_t}) \|\nabla \log \rho_t(x) - \hat{s}_t(x)\|^2_2 \rho_t(x){\rm d}t {\rm d}x\, .
\end{equation}

\begin{newremark}
    For certain choices of $\alpha_t, \beta_t$, the resulting $\epsilon_t$ may blow up. However, the SDE is still well defined; see examples in Appendix \ref{appendix-well-defined-SDE-singular-drift}.
\end{newremark}

\subsection{Equivalence over scalar schedules}
Our next result shows that, remarkably,  $\mathrm{KL}^\star(\alpha, \beta)$ remains constant regardless of the interpolation schedules $\alpha_t, \beta_t$ we choose.
\begin{proposition}
    \label{prop-equivalmnece}
    Let $q_{\eta}(x)$ be the probability density function of $x_1 + \eta z$ with $\eta\ge0$ and denote by $\hat{S}_{\eta}(x)$ an estimator of its score $\nabla \log q_{\eta}(x)$ derived from $\hat{s}_t(x)$. Then
    \begin{equation}
        \label{eqn-equivalence}
        \mathrm{KL}^\star(\alpha, \beta) = 2 \int_0^\infty \eta \cdot \bE[\|\nabla \log q_{r} (x_1 + \eta z) - \hat{S}_{r}(x_1+\eta z)\|^2_2] {\rm d} \eta\, .
    \end{equation}
\end{proposition}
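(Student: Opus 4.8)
The plan is to exploit a scaling symmetry of the linear interpolant. The crucial algebraic observation is that
\[
I_t = \alpha_t z + \beta_t x_1 = \beta_t\big(x_1 + \eta_t z\big), \qquad \eta_t := \alpha_t/\beta_t,
\]
so $I_t$ is a deterministic rescaling of the variable $x_1+\eta_t z$, whose density is $q_{\eta_t}$ by hypothesis. Under the boundary conditions $\alpha_0=\beta_1=1$, $\alpha_1=\beta_0=0$ and the monotonicity $\dot\alpha_t<0$, $\dot\beta_t>0$, the map $t\mapsto\eta_t$ is a strictly decreasing $C^1$ bijection of $(0,1)$ onto $(0,\infty)$, with $\eta_t\to\infty$ as $t\to 0$ and $\eta_t\to 0$ as $t\to 1$. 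This reparametrization is exactly what converts the $\int_0^1(\cdot)\,\mathrm{d}t$ appearing in $\mathrm{KL}^\star(\alpha,\beta)$ in \eqref{eqn-KLopt} into the schedule-free $\int_0^\infty(\cdot)\,\mathrm{d}\eta$ on the right-hand side of \eqref{eqn-equivalence} (here $r$ should be read as $\eta$).

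First I would translate $I_t=\beta_t(x_1+\eta_t z)$ into statements about densities and scores: $\rho_t(x)=\beta_t^{-d} q_{\eta_t}(x/\beta_t)$, hence $\nabla\log\rho_t(x)=\beta_t^{-1}(\nabla\log q_{\eta_t})(x/\beta_t)$, and, under the natural identification of the learned scores across this rescaling (this is precisely the content of the transfer formula of Section~\ref{sec-From one schedule to another}; one may equivalently take it as the definition of $\hat{s}_t$), $\hat{s}_t(x)=\beta_t^{-1}\hat{S}_{\eta_t}(x/\beta_t)$. Substituting these into the integrand of \eqref{eqn-KLopt}, swapping the $t$- and $x$-integrals, and performing the spatial change of variables $x=\beta_t y$ — under which the measure $\rho_t(x)\,\mathrm{d}x$ pushes forward exactly to $q_{\eta_t}(y)\,\mathrm{d}y$ — collapses the inner integral to $G(\eta_t):=\bE\big[\|\nabla\log q_{\eta_t}(x_1+\eta_t z)-\hat{S}_{\eta_t}(x_1+\eta_t z)\|_2^2\big]$, leaving the prefactor $\tfrac{\alpha_t^2}{\beta_t^2}\big(\tfrac{\dot\beta_t}{\beta_t}-\tfrac{\dot\alpha_t}{\alpha_t}\big)$.

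Next I would simplify that prefactor in terms of $\eta_t$. A one-line computation gives $\dot\eta_t=\eta_t\big(\tfrac{\dot\alpha_t}{\alpha_t}-\tfrac{\dot\beta_t}{\beta_t}\big)$, so $\tfrac{\dot\beta_t}{\beta_t}-\tfrac{\dot\alpha_t}{\alpha_t}=-\dot\eta_t/\eta_t$, while $\alpha_t^2/\beta_t^2=\eta_t^2$; the prefactor is therefore $-\eta_t\dot\eta_t$ and
\[
\mathrm{KL}^\star(\alpha,\beta) = -2\int_0^1 \eta_t\,\dot\eta_t\, G(\eta_t)\,\mathrm{d}t.
\]
The substitution $\eta=\eta_t$ (legitimate because $t\mapsto\eta_t$ is a monotone $C^1$ bijection) then turns the right-hand side into $-2\int_{\infty}^{0}\eta\,G(\eta)\,\mathrm{d}\eta = 2\int_0^\infty \eta\,G(\eta)\,\mathrm{d}\eta$, which is \eqref{eqn-equivalence}. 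Note that the final expression involves neither $\alpha$ nor $\beta$, which is the asserted equivalence.

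The main point requiring care — and the reason the section flags its treatment as formal — is the degeneracy at the two endpoints: as $t\to 0$ one has $\beta_t\to0$, $\eta_t\to\infty$, and as $t\to1$ one has $\alpha_t\to0$, $\eta_t\to0$, so both the spatial rescaling $x=\beta_t y$ and the time substitution $\eta=\eta_t$ degenerate there. A fully rigorous argument must justify the Fubini swap of the $t$- and $x$-integrals and control the integrability of $\eta\,G(\eta)$ near $\eta=0$ and $\eta=\infty$ (equivalently, the behavior of the score-matching error as the noise level vanishes or blows up), which is where quantitative assumptions on $\mu^*$ and on the estimators would enter. Modulo these regularity considerations, the proof is exactly the chain of changes of variables described above.
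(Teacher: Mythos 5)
Your proposal is correct and follows essentially the same route as the paper's proof: express $I_t=\beta_t(x_1+\eta_t z)$, use the induced score scaling $\nabla\log\rho_t(x)=\beta_t^{-1}\nabla\log q_{\eta_t}(x/\beta_t)$ (with the matching identification for the estimator), observe that the prefactor equals $-\eta_t\dot\eta_t$, and reparametrize the time integral by $\eta=\alpha_t/\beta_t$. You also correctly flag the $r$-versus-$\eta$ typo in the displayed statement and make explicit a few points the paper leaves implicit (the Fubini swap, the monotone bijectivity of $t\mapsto\eta_t$, and the endpoint degeneracies), which is consistent with the paper's own caveat that this section is formal.
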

\begin{proof}
    We know that $\rho_t(x)$ is the density of $\alpha_t z + \beta_t x_1 = \beta_t ( x_1 + \frac{\alpha_t}{\beta_t}z)$. Thus $\nabla \log \rho_t(x) = \frac{1}{\beta_t}\nabla \log q_{\frac{\alpha_t}{\beta_t}}(\frac{x}{\beta_t})$, and $\hat{s}_t(x) = \frac{1}{\beta_t}\hat{S}_{\frac{\alpha_t}{\beta_t}}(\frac{x}{\beta_t})$. 
Using these relations, we have
\begin{equation}
\begin{aligned}
     \mathrm{KL}^\star(\alpha, \beta) &= 2\int_{\bR^d} \int_0^1 \frac{\alpha_t^2}{\beta_t^2}(\frac{\dot\beta_t}{\beta_t}-\frac{\dot\alpha_t}{\alpha_t}) \|\nabla \log q_{\frac{\alpha_t}{\beta_t}}(\frac{x}{\beta_t}) - \hat{S}_{\frac{\alpha_t}{\beta_t}}(\frac{x}{\beta_t})\|^2_2 \rho_t(x) {\rm d}t {\rm d}x\\
     & = 2\int_0^1 \frac{\alpha_t^2}{\beta_t^2}(\frac{\dot\beta_t}{\beta_t}-\frac{\dot\alpha_t}{\alpha_t}) \bE [\|\nabla \log q_{\frac{\alpha_t}{\beta_t}}(x_1+\frac{\alpha_t}{\beta_t}z) - \hat{S}_{\frac{\alpha_t}{\beta_t}}(x_1+\frac{\alpha_t}{\beta_t}z)\|^2_2] {\rm d}t\, .
\end{aligned}
\end{equation}
Noting that $\frac{\alpha_t^2}{\beta_t^2}(\frac{\dot\beta_t}{\beta_t}-\frac{\dot\alpha_t}{\alpha_t}) = -\frac{\alpha_t}{\beta_t}\frac{\rm d}{{\rm d}t}(\frac{\alpha_t}{\beta_t})$ and using $\alpha_t/\beta_t$ instead of $t$ as integration variable, we arrive at~\eqref{eqn-equivalence}.
\end{proof}

\begin{newremark}
\label{remark-stats-equivalence}
    In \cite{kingma2021variational}, it was pointed out that in diffusion models, different noise schedules lead to the same variational lower bound. In the continuous setting, this corresponds to the KL divergence in path space. Our results generalize their discussion to stochastic interpolants and incorporate the step of a posteriori tuning of diffusion coefficients.
\end{newremark}

Proposition~\ref{prop-equivalmnece} shows that the optimal KL accuracy in path space depends solely on the estimation of $\nabla \log q_{r} (x_1 + rz)$: that is, from the perspective of KL divergence in path space, all linear scalar interpolants with independently coupled endpoints and one endpoint being Gaussian are statistically indistinguishable. This indicates that other metrics need to be explored if we want to select models for improved statistical efficiency. On the other hand, using nonlinear or matrix-valued instead of scalar schedules may potentially lead to different statistical efficiency, a direction of interest in future work.


\section{Numerical Design by Optimizing Averaged Squared Lipschitzness}
\label{sec-numerical-design}
The previous section established that all scalar interpolation schedules are statistically equivalent under KL in path space.
What this equivalence does \emph{not} touch is numerical efficiency: schedules that are statistically interchangeable can produce ODEs whose drift fields differ dramatically in regularity, and hence in how easily they can be integrated to high accuracy with few time steps.
In this section we propose a numerical criterion -- the averaged squared Lipschitzness of the drift -- and study its minimization over scalar schedules.
We focus on ODEs rather than SDEs for simplicity, noting that ODEs typically achieve better empirical performance due to their greater ease of integration~\cite{karras2022elucidating, dockhorn2022genie}.

\subsection{From one schedule to another: a transfer formula}
\label{sec-From one schedule to another}
We first observe that the drift of any scalar schedule can be expressed in closed form in terms of the drift of any other scalar schedule.
This \emph{transfer formula} is what allows the schedule designs we develop later in this section to be used at inference time with a model trained under a different schedule, without retraining.
Without loss of generality, we take as reference the linear schedule $\alpha^\dagger_t = 1-t$, $\beta^\dagger_t = t$.
\begin{proposition}[Transfer formula]
\label{prop-from-one-schedule-to-another}
    Consider the two stochastic interpolants  $I_t^\dagger = \alpha^\dagger_t z + \beta^\dagger_t x_1$ and $I_t = \alpha_t z + \beta_t x_1$ and their associated drifts $b^\dagger_t(x) = \bE[\dot I_t^\dagger | I_t = x] $  and $b_t(x) = \bE[\dot I_t | I_t = x]$. Then with $t^\dagger = 1/(1+\alpha_t/\beta_t)$, it holds that
    \begin{equation}
        \label{eqn-twob}
        b_t(x) =\frac{\dot\alpha_t}{\alpha_t} x  + \Big(\dot\beta_t - \frac{\dot\alpha_t\beta_t}{\alpha_t}\Big) \left((1-t^\dagger)b^\dagger_{t^{\dagger}}\Big(\frac{t^\dagger}{\beta_t}x\Big)+\frac{t^\dagger}{\beta_t}x\right)\, .
    \end{equation}
\end{proposition}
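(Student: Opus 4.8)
The plan is to exploit that, once the two time variables are matched, the two interpolants are deterministic scalar rescalings of one and the same random vector; the conditioning events then generate the same $\sigma$-algebra, and the drifts — which are affine functions of conditional expectations by \eqref{eqn-rel} — can be transported between schedules by a change of argument. First I would write $I_t = \alpha_t z + \beta_t x_1 = \beta_t\big(x_1 + \tfrac{\alpha_t}{\beta_t}z\big)$ and, for the reference schedule, $I^\dagger_s = (1-s)z + s x_1 = s\big(x_1 + \tfrac{1-s}{s}z\big)$. Choosing $s = t^\dagger := 1/(1+\alpha_t/\beta_t)$ makes the noise-to-signal ratio $\tfrac{1-t^\dagger}{t^\dagger}$ equal to $\tfrac{\alpha_t}{\beta_t}$, so that with $W := x_1 + \tfrac{\alpha_t}{\beta_t}z$ one has $I_t = \beta_t W$ and $I^\dagger_{t^\dagger} = t^\dagger W$. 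Since $\beta_t > 0$ and $t^\dagger\in(0,1)$ for $t\in(0,1)$, the events $\{I_t = x\}$, $\{W = x/\beta_t\}$ and $\{I^\dagger_{t^\dagger} = (t^\dagger/\beta_t)x\}$ coincide, so for every integrable functional $G$ of $(z,x_1)$,
\[
\bE[\,G \mid I_t = x\,] \;=\; \bE\big[\,G \,\big|\, I^\dagger_{t^\dagger} = \tfrac{t^\dagger}{\beta_t}x\,\big].
\]
In particular this holds for $G = x_1$, which is the identity I will use (taking $G=z$ gives the companion score identity through Stein's relation \eqref{eqn-stein}).

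Next I would rewrite $b_t$ purely in terms of the denoiser $\bE[x_1 \mid I_t = x]$. Eliminating $\bE[x_0\mid I_t=x]$ between the two lines of \eqref{eqn-rel} gives, for any admissible schedule,
\[
b_t(x) \;=\; \frac{\dot\alpha_t}{\alpha_t}\,x \;+\; \Big(\dot\beta_t - \frac{\dot\alpha_t\beta_t}{\alpha_t}\Big)\,\bE[x_1 \mid I_t = x],
\]
which is just \eqref{eqn-b-score} after substituting $\nabla\log\rho_t = (\beta_t\bE[x_1\mid I_t=x]-x)/\alpha_t^2$. Specializing this identity to the reference pair $(\alpha^\dagger_s,\beta^\dagger_s)=(1-s,s)$ collapses it to $b^\dagger_s(y) = \big(\bE[x_1\mid I^\dagger_s=y]-y\big)/(1-s)$, i.e.\ $\bE[x_1 \mid I^\dagger_s = y] = (1-s)\,b^\dagger_s(y) + y$.

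To finish, I would set $s = t^\dagger$ and $y = (t^\dagger/\beta_t)x$ in the last display and invoke the $\sigma$-algebra identity from the first step to replace $\bE[x_1\mid I^\dagger_{t^\dagger}=(t^\dagger/\beta_t)x]$ by $\bE[x_1\mid I_t=x]$, obtaining
\[
\bE[x_1 \mid I_t = x] \;=\; (1-t^\dagger)\,b^\dagger_{t^\dagger}\!\Big(\tfrac{t^\dagger}{\beta_t}x\Big) + \tfrac{t^\dagger}{\beta_t}x .
\]
Substituting this into the denoiser expression for $b_t$ from the second step gives exactly \eqref{eqn-twob}.

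There is no genuine analytic difficulty here; the whole content sits in the bookkeeping of the first step — matching the two time parameters so that $I_t$ and $I^\dagger_{t^\dagger}$ become scalar multiples of the \emph{same} $W$, and checking that the rescaling factors $\beta_t$ and $t^\dagger$ never vanish on $(0,1)$ so that conditioning on $I_t$, on $W$, and on $I^\dagger_{t^\dagger}$ are genuinely interchangeable. I would also flag at the outset that $b^\dagger$ is to be read as $\bE[\dot I^\dagger_t \mid I^\dagger_t = x]$ (the display in the statement appears to contain a typo). A fully equivalent route transports the score rather than the denoiser, using $\nabla\log\rho_t(x) = \tfrac1{\beta_t}\nabla\log q_{\alpha_t/\beta_t}(x/\beta_t)$ exactly as in the proof of Proposition~\ref{prop-equivalmnece} together with \eqref{eqn-b-score}; the denoiser version above is marginally shorter.
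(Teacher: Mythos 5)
Your proposal is correct and is essentially the same as the paper's proof. The paper likewise reduces each drift to the denoiser $\bE[x_1 \mid x_1 + (\alpha_t/\beta_t) z = x/\beta_t]$ and matches the noise-to-signal ratios $\alpha_t/\beta_t = (1-t^\dagger)/t^\dagger$; you simply make the underlying measurability argument explicit by introducing $W = x_1 + (\alpha_t/\beta_t)z$ and observing that $I_t$ and $I^\dagger_{t^\dagger}$ generate the same $\sigma$-algebra, which is a slightly more careful phrasing of the same bookkeeping rather than a different route.
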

\begin{proof}
By direct algebraic calculations, we get
    \begin{equation}
    \label{eqn-step1}
        \begin{aligned}
            b^\dagger_t(x) &= \bE[x_1 - z | I_t = x] = \bE[x_1 - \frac{I_t - tx_1}{1-t}| I_t = x]
            \\
            &= -\frac{x}{1-t} + \frac{1}{1-t}\bE[x_1|x_1 +\frac{1-t}{t}z = \frac{x}{t}]\, ,
        \end{aligned}
    \end{equation}
    and similarly
    \begin{equation}
    \label{eqn-step2}
        \begin{aligned}
            b_t(x) &= \bE[\dot\alpha_t z + \dot\beta_t x_1 | I_t = x] = \bE[\dot\alpha_t \frac{I_t - \beta_t x_1}{\alpha_t} + \dot\beta_t x_1| I_t = x]
            \\
            &= \frac{\dot\alpha_t}{\alpha_t} x  + (\dot\beta_t - \frac{\dot\alpha_t\beta_t}{\alpha_t})\bE[x_1|x_1 +\frac{\alpha_t}{\beta_t}z = \frac{x}{\beta_t}]\, .
        \end{aligned}
    \end{equation}
    Let $t^\dagger$ satisfy $\alpha_t / \beta_t = (1-t^\dagger)/t^\dagger$. This means that $t^\dagger = 1/(1+\alpha_t/\beta_t)$. Therefore, combining \eqref{eqn-step1} and \eqref{eqn-step2}, we arrive at \eqref{eqn-twob}. 
\end{proof}
The proposition implies that we can change the interpolation schedule from one to another whenever we know the drift function for any reference schedule.
The same identity applies to learned estimators of the drift, so the schedule can be tuned \emph{at inference time} rather than during training.
Related identities have appeared in the literature \cite{kingma2021variational,karras2022elucidating}.
We use the transfer formula systematically in the experiments of Section~\ref{sec-numerical-demo}: every model is trained with the linear schedule, and the designed schedule is applied at inference by composing the trained drift through Proposition~\ref{prop-from-one-schedule-to-another}.
The remaining question is how to choose the new schedule, which we address next.
\subsection{Optimizing averaged squared Lipschitzness} 
\label{sec-Optimizing averaged squared Lipschitzness}
As natural and principled approach to choose the schedule, we propose to  minimize the following averaged squared Lipschitzness criterion.
\begin{definition}
    The averaged squared Lipschitzness (avg-Lip$^2$) is defined as 
\begin{equation}
    A_2 = \int_0^1 \mathbb{E}[\|\nabla b_t(I_t)\|^2_2]\,\mathrm{d}t\, ,
\end{equation}
where $\|\cdot\|_2$ is the $2$-norm.
\end{definition}
In general, we could optimize $A_2$ over all possible nonlinear interpolants $I_t$. Here, for simplicity, we restrict ourselves to linear interpolants  with scalar schedules $\alpha, \beta$\footnote{See discussions on matrix-valued schedules in Remark \ref{remark-mtx-valued-schedules}.}. 
We provide several examples in the next two sections and show the significance of this criterion in numerical performance and compares it with optimal transport.

\subsection{1D example: Gaussian}
\label{sec-1dexamplegaussian}
We begin with analytic studies on 1D Gaussians.
\begin{example}[1D Gaussian]
\label{sec-1d-gaussian}
Consider $I_t = \alpha_t z + \beta_t x_1$ with $x_1 \sim \sfN(0, M) \perp z \sim \sfN(0,1)$. Here $M >0$ is a positive scalar.
Then
\begin{equation*}
    \begin{aligned}
        b_t(x)  = \bE[\dot{I}_t|I_t = x] = \mathrm{Cov}(\dot{I}_t,I_t)\mathrm{Cov}(I_t)^{-1}x = (\alpha_t\dot{\alpha}_t  + \beta_t \dot{\beta}_t M)(\alpha_t^2  + \beta_t^2 M)^{-1}x\, .
    \end{aligned}
\end{equation*}
If we take $\alpha_t = 1-t, \beta_t = t$, we get
\[b_t(x) = \frac{t-1  + t M}{(1-t)^2 + t^2M}x\, .\]
Suppose $M$ is a large number\footnote{Although we can always use variance preserving design to fix this setting, it may still occur for a particular Fourier frequency component in high high-dimensional setting. Similar discussions apply when $M$ is a small number.}. We have 
\begin{equation*}
A_2 = \int_0^1 \frac{(t-1  + t M)^2}{((1-t)^2 + t^2M)^2}{\rm d}t \geq \int_{\frac{1}{M^{1/3}}}^{\frac{1}{M^{1/2}}}    \frac{(t-1  + t M)^2}{((1-t)^2 + t^2M)^2}{\rm d}t \geq \Omega(\sqrt{M})\, .
\end{equation*}
Moreover, the Lipschitzness $\|\nabla b_t(1/M)\|_2 \geq \Omega(M)$ which grows linearly with $M$.

However, we can optimize 
\begin{equation}
\begin{aligned}
    A_2 = \int_0^1 \bE[\|\nabla b_t(I_t)\|_2^2]\,  {\rm d}t & = \int_0^1 \bE[\|\mathrm{Cov}(\dot{I}_t,I_t)\mathrm{Cov}(I_t)^{-1}\|_2^2]\, {\rm d}t\\
        & = \frac{1}{4}\int_0^1 \left\|\frac{{\rm d}}{{\rm d}t}\log \mathrm{Cov}(I_t)\right\|_2^2 {\rm d}t\, .
\end{aligned}
\end{equation}
By Cauchy–Schwarz inequality, the minimizer satisfies $\frac{{\rm d}}{{\rm d}t}\log \mathrm{Cov}(I_t) = \mathrm{const}$. To achieve the minimum, we get $\log \mathrm{Cov}(I_t) = (1-t)\log \mathrm{Cov}(I_0) + t \log \mathrm{Cov}(I_1)$. 
Solving this equation yields $\alpha_t^2  + \beta_t^2 M = M^t$. Taking the choice $\alpha_t^2 = 1 - \beta^2_t$, we obtain the interpolation schedule
\begin{equation}
    \alpha_t = \sqrt{\frac{M - M^t}{M - 1}}, \beta_ t= \sqrt{\frac{M^t - 1}{M - 1}}\, .
\end{equation} 
For such choice,
 $b_t(x) = \frac{1}{2}(\log M) x$. The corresponding $A_2 = O(\log^2 M)$ and $\|\nabla b_t(x)\|_2 \leq \frac{1}{2}|\log M|$ for all $t \in [0,1], x \in \bR$.
This shows that there is an exponential improvement in the averaged squared Lipschitzness and the actual Lipschitz constant of the drift, compared to $\alpha_t = 1-t, \beta_t = t$.
\end{example}
\begin{newremark}
\label{remark-OT-1D-Gaussian}
    We compare the above to optimal transport, which minimizes the squared path length $P = \int_0^1 \bE[\|b_t(I_t)\|_2^2]\,  {\rm d}t$. Using the optimal transport theory\footnote{See details in Appendix \ref{sec-OT-1D-Gaussian}.}, we get that 
\[b_t(x) = \frac{\sqrt{M}-1}{1-t+t\sqrt{M}}x\, .\]
This can have a large Lipschitz constant near $t=0$ when $M$ is large. 
\end{newremark}
\subsection{1D example: Gaussian mixture} 
\label{sec-1dexample-gmm}
We then move to Gaussian mixture.
\begin{example}[1D Gaussian mixtures]
Consider the 1D bimodal Gaussian mixture \[\mu^*(x) = p\sfN(x;M,1) + (1-p)\sfN(x;-M,1)\, . \]
To enable an explicit analytic study\footnote{See calculation details in Remark \ref{prop-gm-1D-details} in Appendix \ref{appendix-technical-details-optm-lip}.}, we take $\alpha_t = \sqrt{1-\beta_t^2}$, which leads to
\begin{equation}
\label{eqn-1dGMM-bt}
    \begin{aligned}
        b_t(x) = \dot\beta_t M \mathrm{tanh}(h+\beta_t M x)\, ,
    \end{aligned}
\end{equation}
where $h$ satisfies $\frac{p}{1-p}=\exp(2h)$, or equivalently $p = \frac{\exp(h)}{\exp(h)+\exp(-h)}$.

Suppose $h > 0$. If $\beta_t = t$ and $M$ is large, we observe that at the initial time, $b_0(x) = M\tanh(h)$, which is large. In the one-dimensional case, this means all points move toward the right when using a forward Euler discretization with step size $O(1)$. Even for negative $x$, such a drift will likely push these points into positive territory. On the other hand, we know that for $x > 0$, we have $b_t(x) > 0$. This means that once a point reaches the positive side, it will remain positive. Therefore, such a discretization scheme will miss the mode on the left side. The above argument demonstrates that we must use an initial step size of $O(1/M)$ to ensure that the discretization does not miss modes.

Below, we study the optimization of avg-Lip$^2$, which leads to a schedule $\beta$ that grows slowly at initial time that does not suffer from the mode missing issue, namely, we can safely use a discretization scheme with uniform stepsize.

\begin{proposition}[Optimizing avg-Lip$^2$ for 1D Gaussian mixture]
\label{min-lip 1d gaussian mixture}
For the 1D bimodal Gaussian mixture example, if we optimize $A_2$ over all possible linear interpolants $I_t$ with scalar schedules satisfying $\alpha_t^2 + \beta_t^2 = 1$, then the optimal $\beta_t$ ($0 \le t \le 1$) satisfies
\begin{equation}
\label{eqn-1dgmm-true-solution}
    t = \frac{\int_0^{\beta_t} u (G(u))^{1/2} {\rm d}u}{\int_0^1 u (G(u))^{1/2} {\rm d}u}\, ,
\end{equation}
where $G(u) = \bE[\operatorname{sech}^4(h+uM (\sqrt{1-u^2}z + ux_1))]$.
Equivalently, we have the following Euler-Lagrange equation for the optimal $\beta_t$:
\[-\dot\beta_t^2\beta_t-\ddot\beta_t\beta_t^2 + 2\dot\beta_t^2\beta_t^3 M^2(1+\frac{3}{4}\operatorname{Corr}(I_t\operatorname{tanh}(h+\beta_tMI_t), \operatorname{sech}^4(h+\beta_tMI_t)))=0 \, ,  \]
where $I_t  = \sqrt{1-\beta^2_t}z + \beta_t x_1$.
If we omit the Corr term, we get $\dot\beta_t^2 \beta_t - \ddot\beta_t\beta_t^2 + 2\dot\beta_t^2\beta_t^2 M^2 = 0$ which has the solution 
\begin{equation}
\label{eqn-gmm-approx-solution}
    \beta_t = \frac{1}{M}\sqrt{-\log(1+(e^{-M^2}-1)t)} \, .
\end{equation}
\end{proposition}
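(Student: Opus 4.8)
\emph{Overall plan.} I would turn the minimization of $A_2$ into a one‑dimensional variational problem for $\beta_t$, solve it by the Cauchy–Schwarz reparametrization argument already used for the Gaussian Example~\ref{sec-1d-gaussian} to obtain the integral form \eqref{eqn-1dgmm-true-solution}, and then rewrite its first‑order optimality condition in Euler–Lagrange form by an explicit computation of $\frac{{\rm d}}{{\rm d}\beta}\log G(\beta)$. First, differentiating \eqref{eqn-1dGMM-bt} gives $\nabla b_t(x)=\dot\beta_t\beta_t M^2\operatorname{sech}^2(h+\beta_t M x)$. With $\alpha_t=\sqrt{1-\beta_t^2}$ and $I_t=\alpha_t z+\beta_t x_1$, the variable $h+\beta_t M I_t$ has exactly the law of $h+uM(\sqrt{1-u^2}\,z+ux_1)$ at $u=\beta_t$, so $A_2=M^4\int_0^1\dot\beta_t^2\,\beta_t^2\,G(\beta_t)\,{\rm d}t$ with $G$ as in the statement, and $G>0$ since $\operatorname{sech}^4>0$.

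\emph{Variational solution.} Because the integrand has no explicit $t$‑dependence, the argument of Example~\ref{sec-1d-gaussian} applies verbatim: by Cauchy–Schwarz, $\int_0^1\dot\beta_t^2\beta_t^2G(\beta_t)\,{\rm d}t\ge(\int_0^1\beta_t\sqrt{G(\beta_t)}\,\dot\beta_t\,{\rm d}t)^2=(\int_0^1 u\sqrt{G(u)}\,{\rm d}u)^2$, the right‑hand side being schedule‑independent, with equality iff $\beta_t\sqrt{G(\beta_t)}\,\dot\beta_t$ is constant in $t$. Integrating this ODE from $\beta_0=0$ and normalizing by the value at $\beta_1=1$ gives \eqref{eqn-1dgmm-true-solution}. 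Equivalently, optimality is the conserved quantity $\dot\beta_t^2\beta_t^2G(\beta_t)=\mathrm{const}$; differentiating once more in $t$ and dividing by $\dot\beta_t G(\beta_t)$ yields $2\ddot\beta_t\beta_t^2+2\dot\beta_t^2\beta_t+\dot\beta_t^2\beta_t^2\,\frac{G'(\beta_t)}{G(\beta_t)}=0$, which becomes the displayed Euler–Lagrange equation once $G'/G$ is identified.

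\emph{Computing $G'/G$ (the crux).} Write $Y=h+\beta M I_\beta$, $I_\beta=\alpha_\beta z+\beta x_1$, $\alpha_\beta=\sqrt{1-\beta^2}$, and differentiate under the expectation: $G'(\beta)=-4M\,\bE[\operatorname{sech}^4 Y\,\tanh Y\,(\tfrac{1-2\beta^2}{\alpha_\beta}z+2\beta x_1)]$. The key step is the decomposition $\tfrac{1-2\beta^2}{\alpha_\beta}z+2\beta x_1=I_\beta+\tfrac{\beta}{\alpha_\beta}J_\beta$ with $J_\beta:=-\beta z+\alpha_\beta x_1$. Since $\operatorname{sech}^4 Y\tanh Y$ is a function of $I_\beta$ alone, the $J_\beta$ part collapses via $\bE[J_\beta\mid I_\beta=x]=\alpha_\beta M\tanh(h+\beta M x)$, obtained from \eqref{eqn-stein}–\eqref{eqn-b-score} (equivalently from the Gaussian‑mixture posterior $\bE[x_1\mid I_\beta=x]=\beta x+\alpha_\beta^2 M\tanh(h+\beta M x)$). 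For the $I_\beta$ part I would split $I_\beta=\alpha_\beta z+\beta x_1$, treat the $z$ term by Gaussian integration by parts using $\tfrac{{\rm d}}{{\rm d}y}(\operatorname{sech}^4 y\tanh y)=\operatorname{sech}^4 y(1-5\tanh^2 y)$, and again replace $x_1$ by its conditional mean; solving the resulting linear relation gives $\bE[\operatorname{sech}^4 Y\tanh Y\,I_\beta]=\beta M\,\bE[\operatorname{sech}^4 Y(1-4\tanh^2 Y)]$. Collecting terms, $G'(\beta)=-4\beta M^2\,\bE[\operatorname{sech}^4 Y(1-3\tanh^2 Y)]=-4\beta M^2(G(\beta)-3\,\bE[\operatorname{sech}^4 Y\tanh^2 Y])$. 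The same Stein/conditioning computation also gives $\bE[I_\beta\tanh Y]=\beta M$; together with $\bE[\operatorname{sech}^4 Y]=G(\beta)$ this identifies $-3\,\bE[\operatorname{sech}^4 Y\tanh^2 Y]/G(\beta)$ with $\tfrac34\operatorname{Corr}(I_\beta\tanh Y,\operatorname{sech}^4 Y)$ in the normalized sense $\operatorname{Corr}(A,B)=\operatorname{Cov}(A,B)/(\bE A\,\bE B)$. Substituting $G'/G=-4\beta M^2(1+\tfrac34\operatorname{Corr})$ into the identity of the previous paragraph yields the stated Euler–Lagrange equation.

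\emph{Approximate schedule, and the main obstacle.} Setting the correlation term to zero reduces the Euler–Lagrange equation to $-\dot\beta_t^2\beta_t-\ddot\beta_t\beta_t^2+2\dot\beta_t^2\beta_t^3 M^2=0$; writing $g(t)=-\log(1+(e^{-M^2}-1)t)$ so that $M^2\beta_t^2=g$ and $g''=(g')^2$, a direct substitution confirms that \eqref{eqn-gmm-approx-solution} solves this ODE and satisfies $\beta_0=0,\beta_1=1$. I expect the evaluation of $G'(\beta)$ to be the only genuine difficulty: it hinges on finding the orthogonal decomposition of $\tfrac{{\rm d}}{{\rm d}\beta}(\beta I_\beta)$ that makes every term reducible either by Gaussian integration by parts in $z$ or by conditioning on $I_\beta$, and on recognizing the remaining second‑moment ratio as the $\operatorname{Corr}$ expression.
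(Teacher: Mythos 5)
Your proposal is correct, and for the harder step — the Euler–Lagrange equation with the correlation term — it takes a genuinely different route than the paper. The paper writes $A_2 = M^4\int_0^1\int_{\bR} L(\dot\beta_t,\beta_t,x)\,\rho(\beta_t,x)\,{\rm d}x\,{\rm d}t$ with the explicit closed form of the GMM density $\rho(\beta,x)\propto e^{-(x^2+\beta^2M^2)/2}\cosh(h+\beta Mx)$, differentiates both $L$ and $\rho$ in $\beta$, and organizes the resulting Euler–Lagrange equation term by term, using $\bE[I_t\tanh(h+\beta_tMI_t)]=\beta_tM$ (proved by a direct Gaussian integral) to isolate the covariance term. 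You instead start from the first integral $\dot\beta_t^2\beta_t^2G(\beta_t)=\mathrm{const}$ obtained by Cauchy–Schwarz (equivalent to the paper's Beltrami argument), differentiate once in $t$ to reduce the problem to computing $G'/G$, and then evaluate $G'$ by differentiating under the expectation with $z,x_1$ held fixed. Your key identities — the decomposition $\tfrac{{\rm d}}{{\rm d}\beta}(\beta I_\beta)=I_\beta+\tfrac{\beta}{\alpha_\beta}J_\beta$ with $\bE[J_\beta\mid I_\beta=x]=\alpha_\beta M\tanh(h+\beta Mx)$, Gaussian integration by parts on $z$ via $(\operatorname{sech}^4 y\tanh y)'=\operatorname{sech}^4 y(1-5\tanh^2 y)$, and the resulting self-consistent relation giving $\bE[\operatorname{sech}^4 Y\tanh Y\,I_\beta]=\beta M\,\bE[\operatorname{sech}^4 Y(1-4\tanh^2 Y)]$ — all check out and yield $G'/G=-4\beta M^2(1+\tfrac34\operatorname{Corr})$ exactly as needed. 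Your approach is structurally cleaner (it avoids writing $\rho$ explicitly and instead uses Stein/posterior-mean identities that mirror the score relations already in the paper), at the cost of the one nontrivial identity for $\bE[\operatorname{sech}^4 Y\tanh Y\,I_\beta]$. You also correctly and explicitly flag that the paper's $\operatorname{Corr}$ is the nonstandard normalization $\operatorname{Cov}(A,B)/(\bE A\,\bE B)$, and in the final step you use the correct reduced equation $-\dot\beta_t^2\beta_t-\ddot\beta_t\beta_t^2+2\dot\beta_t^2\beta_t^3M^2=0$ and the substitution $g''=(g')^2$ for $g=M^2\beta_t^2$, whereas the proposition statement's displayed reduced ODE and the paper's intermediate $\ddot f_t=M^2\dot f_t$ both contain typos that your version silently corrects.
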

The proof of this proposition is in Appendix \ref{sec-proof-min-lip-gmm}.
\begin{newremark}
    The time-dilated schedule studied in \cite{aranguri2025optimizing} also resolve the mode missing issue:
\begin{equation}
\label{eqn-dilated-schedule}
\beta_t =
\begin{cases}
\displaystyle \frac{2\kappa\,t}{M}, 
& t \in \bigl[0,\tfrac12\bigr]\, , \\[1ex]
\displaystyle \frac{\kappa}{M}
+\Bigl(1-\frac{\kappa}{M}\Bigr)\,(2t-1),
& t \in \bigl[\tfrac12,1\bigr]\, .
\end{cases}
\end{equation}
where $\kappa$ is a constant. 
\end{newremark}
\end{example}
We plot different schedules in Figure \ref{fig:GMMschedules} and we solve for the true solutions numerically using \eqref{eqn-1dgmm-true-solution}. The dilated \eqref{eqn-dilated-schedule}, optimal min-avg-Lip$^2$ \eqref{eqn-1dgmm-true-solution}, and approximate min-avg-Lip$^2$ solution \eqref{eqn-gmm-approx-solution} all exhibit slower growth near $t=0$ compared to the standard linear schedule. Their key difference lies in their behavior near $t=1$. The optimal and approximate min-avg-Lip$^2$ solutions exhibit more rapid growth near $t=1$, which may cause numerical issues. However, their initial slowness allows the method to sample both modes without using a small stepsize, as we demonstrate in Section \ref{sec-exp-gmm}.
\begin{figure}
    \centering
    \includegraphics[width=0.4\linewidth]{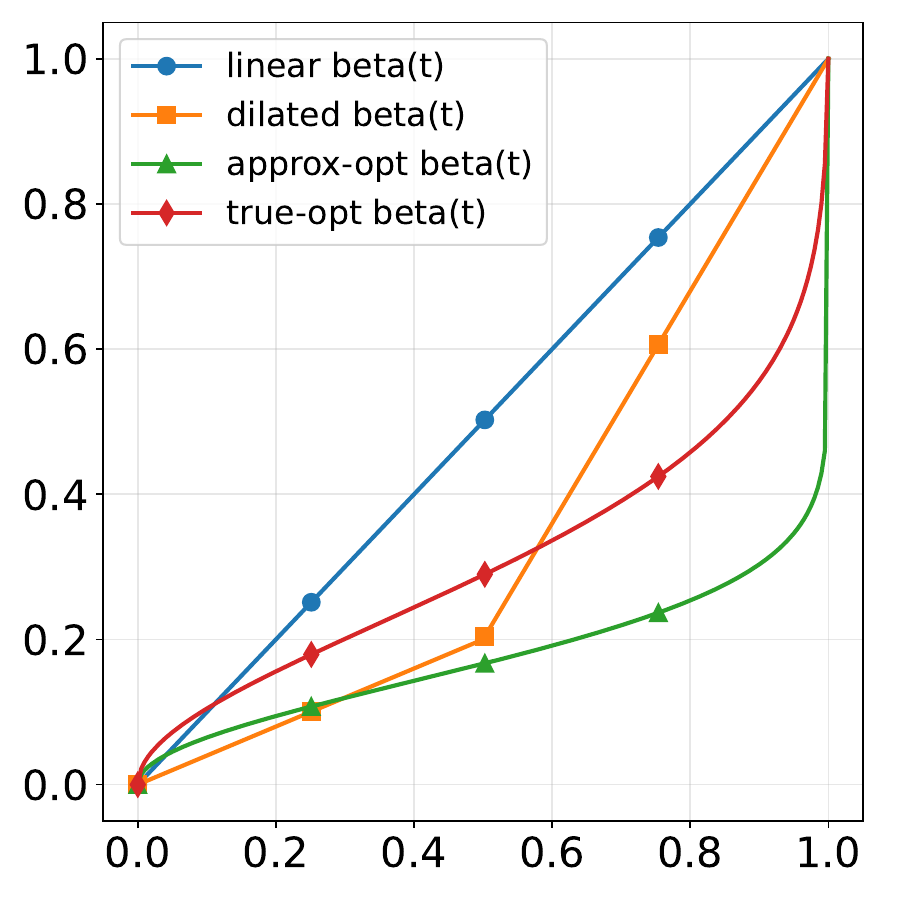}
    \includegraphics[width=0.4\linewidth]{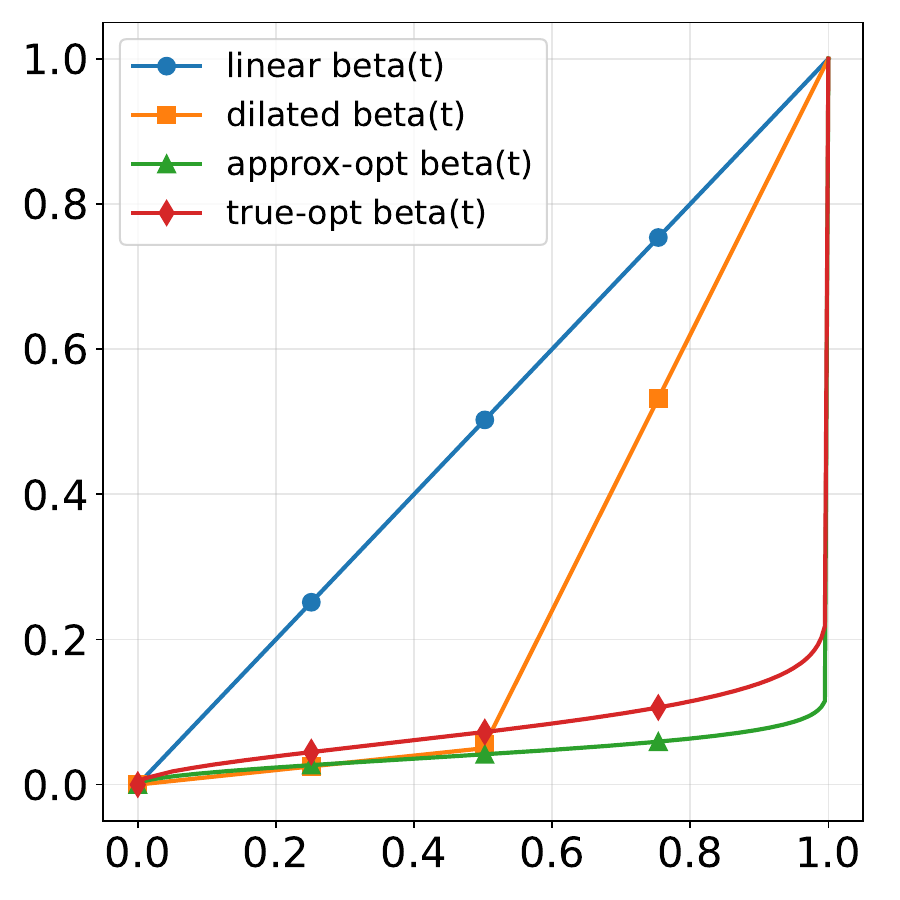}
    \caption{Comparison of different interpolation schedules $\beta_t$. Left: $M=5$. Right: $M=20$. We set $p=0.3$. For the dilated schedule, we take $\kappa = 1$.}
    \label{fig:GMMschedules}
\end{figure}

\begin{newremark}
    One may optimize instead $\int_0^1 \mathbb{E}[\|\nabla b_t(I_t)\|^{2k}_2]\,\mathrm{d}t$, then the optimal $\beta_t$ will satisfy
    \[ t = \frac{\int_0^{\beta_t} u (G(u))^{1/{2k}} {\rm d}u}{\int_0^1 u (G(u))^{1/{2k}} {\rm d}u}\, , \]
    where now $G(u) = \bE[\operatorname{sech}^{4k}(h+uM (\sqrt{1-u^2}z + ux_1))]$ and a similar ODE for $\beta_t$ holds. For details, see Appendix \ref{sec-proof-min-lip-gmm}. Detailed investigation of choice of $k$ is out of the scope of this paper, which may improve the behavior near $t=1$.
\end{newremark}
\subsection{High dimensional examples} 
\label{sec-High dimensional examples}
We then move beyond 1D examples.
\label{sec-high-D-examples}
\begin{proposition}[Optimizing avg-Lip$^2$ for high dimensional Gaussians]
\label{prop-optimize-Lip-highD-Gaussian}
    Consider $x_1 \sim \sfN(0, M) \perp z \sim \sfN(0,\mathrm{I})$ in $d$ dimensions with $M$ now a positive-definite symmetric matrix. Denote the eigendecomposition $M = U\Lambda U^T$ where $U$ is an orthogonal matrix and $\Lambda = \mathrm{diag}(\lambda^{(1)},...,\lambda^{(d)})$ with $1\geq \lambda^{(1)}\geq \lambda^{(2)}\geq ... \geq \lambda^{(d)} > 0$. 

If we optimize $A_2$ over all possible linear interpolants $I_t$ with scalar schedules, then, the optimal solution is $I_t = \alpha_t z + \beta_t x_1$ with 
\begin{equation}
\label{eqn-high-D-gaussian-alpha-beta}
    \alpha_t =\sqrt{\frac{\lambda^\star-(\lambda^\star)^t}{\lambda^\star-1}}, \beta_t =   \sqrt{\frac{(\lambda^\star)^t - 1}{\lambda^\star-1}}\, .
\end{equation}
where $\lambda^\star =\lambda^{(d)} $. 
    For the optimal solution, the corresponding 2-norm $\|\nabla b_t(x)\|_2 = \frac{1}{2}|\log \lambda^\star|$.
\end{proposition}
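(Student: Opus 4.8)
The plan is to exploit the Gaussian structure to reduce $A_2$ to a spectral quantity, decouple the optimization across the eigenvalues of $M$, and then recognize that under the operator $2$-norm only the smallest eigenvalue is binding. First I would note that $I_t=\alpha_t z+\beta_t x_1$ is centered Gaussian with covariance $\Sigma_t=\alpha_t^2\mathrm{I}+\beta_t^2 M$, so the drift is linear and $\nabla b_t(x)=\mathrm{Cov}(\dot I_t,I_t)\Sigma_t^{-1}=\tfrac12\dot\Sigma_t\Sigma_t^{-1}=\tfrac12\frac{\mathrm{d}}{\mathrm{d}t}\log\Sigma_t$, a constant symmetric matrix. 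Because the schedule is scalar, $\Sigma_t$ is a function of $M$ for every $t$ and hence diagonal in the fixed eigenbasis $U$; writing $g_i(t)=\log(\alpha_t^2+\beta_t^2\lambda^{(i)})$, the eigenvalues of $\nabla b_t$ are $\tfrac12\dot g_i(t)$, so — taking $\|\cdot\|_2$ to be the operator norm, which is what makes the claimed answer $\tfrac12|\log\lambda^\star|$ come out — one gets $\mathbb{E}[\|\nabla b_t(I_t)\|_2^2]=\tfrac14\max_i\dot g_i(t)^2$ and $A_2=\tfrac14\int_0^1\max_i\dot g_i(t)^2\,\mathrm{d}t$.

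Next I would prove a matching lower bound valid for every scalar schedule. Dropping the maximum to the index $d$ of the smallest eigenvalue $\lambda^\star=\lambda^{(d)}$ and using the boundary conditions $\alpha_0=1,\beta_0=0,\alpha_1=0,\beta_1=1$, one has $g_d(0)=0$ and $g_d(1)=\log\lambda^\star$, so Cauchy--Schwarz gives $\int_0^1\dot g_d^2\,\mathrm{d}t\ge\big(\int_0^1\dot g_d\,\mathrm{d}t\big)^2=(\log\lambda^\star)^2$, hence $A_2\ge\tfrac14(\log\lambda^\star)^2$.

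Then I would verify that the schedule \eqref{eqn-high-D-gaussian-alpha-beta} attains this bound. One checks that $\alpha_t^2+\beta_t^2=1$, that the boundary and monotonicity constraints hold (for $0<\lambda^\star<1$; the case $\lambda^\star=1$ is $M=\mathrm{I}$ and trivial), and that $\alpha_t^2+\beta_t^2\lambda^\star=(\lambda^\star)^t$, so $\dot g_d\equiv\log\lambda^\star$. For any other eigenvalue $a=\lambda^{(i)}\in[\lambda^\star,1]$ a short computation gives $\alpha_t^2+\beta_t^2 a=\big((\lambda^\star-a)+(a-1)(\lambda^\star)^t\big)/(\lambda^\star-1)$ and therefore
\[ |\dot g_i(t)|=\frac{(1-a)(\lambda^\star)^t}{(a-\lambda^\star)+(1-a)(\lambda^\star)^t}\,|\log\lambda^\star|\le|\log\lambda^\star|, \]
with equality iff $a=\lambda^\star$. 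Hence $\max_i\dot g_i(t)^2=(\log\lambda^\star)^2$ for all $t$, so $A_2=\tfrac14(\log\lambda^\star)^2$ matches the lower bound, the stated schedule is optimal, and $\|\nabla b_t(x)\|_2=\tfrac12\max_i|\dot g_i(t)|=\tfrac12|\log\lambda^\star|$ for all $t$ and $x$.

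The main obstacle is the second half of the achievability step. Once the problem is reduced to the scalar functions $g_i$, one is tempted to make all of them affine simultaneously — which minimizes each mode in isolation — but the two degrees of freedom $\alpha_t,\beta_t$ cannot achieve this once $d\ge3$. The correct insight is that under the operator norm only the extreme mode counts, so one forces $g_d$ to be affine and then must check — this is precisely the displayed inequality — that every other mode's instantaneous Lipschitz contribution stays below that of the bottom mode \emph{uniformly in $t$}. Keeping the sign bookkeeping straight there, using $\lambda^\star\le\lambda^{(i)}\le1$, is the one place that requires care; the rest is routine algebra and boundary verification.
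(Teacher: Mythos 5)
Your proof is correct and takes essentially the same approach as the paper: reduce to the spectral quantities $\tfrac12\dot g_i(t)$ with $g_i(t)=\log(\alpha_t^2+\beta_t^2\lambda^{(i)})$, obtain the lower bound $\tfrac14(\log\lambda^\star)^2$ via Cauchy--Schwarz on the $\lambda^{(d)}$ mode, and achieve it by making $g_d$ affine. Your treatment is in fact more complete than the paper's, which only asserts that ``the assumption $1\geq\lambda^{(1)}$ is used to verify the minimum''; you carry out that verification explicitly, showing $|\dot g_i(t)|\leq|\log\lambda^\star|$ uniformly in $t$ for every eigenvalue $a=\lambda^{(i)}\in[\lambda^\star,1]$ under the chosen schedule, which is exactly the step the paper leaves implicit.
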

\begin{proof}[Proof of Proposition \ref{prop-optimize-Lip-highD-Gaussian}]
    First, because the interpolant is linear and $z, x_1$ are jointly Gaussian, we have that $I_t, \dot I_t$ are jointly Gaussian. Thus,
    \[b_t(x) = \bE[\dot I_t | I_t = x] = \mathrm{Cov}(\dot{I}_t,I_t)\mathrm{Cov}(I_t)^{-1}x = (\alpha_t\dot{\alpha}_t  + \beta_t \dot{\beta}_t M)(\alpha_t^2  + \beta_t^2 M)^{-1}x\, .\]
    We can calculate the $2$-norm using the eigenvalues:
    \[\|\nabla b_t(x)\|_2 = \max_{1\leq j \leq d} \left|\frac{\alpha_t\dot{\alpha}_t  + \beta_t \dot{\beta}_t \lambda^{(j)}}{\alpha_t^2  + \beta_t^2 \lambda^{(j)}}\right| = \max\{\left|\frac{\alpha_t\dot{\alpha}_t  + \beta_t \dot{\beta}_t \lambda^{(1)}}{\alpha_t^2  + \beta_t^2 \lambda^{(1)}}\right|, \left|\frac{\alpha_t\dot{\alpha}_t  + \beta_t \dot{\beta}_t \lambda^{(d)}}{\alpha_t^2  + \beta_t^2 \lambda^{(d)}}\right| \}\, ,  \]
    where, in the last equality, we used the fact that the function $\lambda \to \frac{\alpha_t\dot{\alpha}_t  + \beta_t \dot{\beta}_t \lambda}{\alpha_t^2  + \beta_t^2 \lambda}$ is non-decreasing. This implies that for $\lambda = \lambda^{(1)}$ or $\lambda^{(d)}$,
    \begin{equation*}
        A_2 = \int_0^1 \bE[\|\nabla b_t(I_t)\|^2_2] {\rm d}t \geq \int_0^1 \left|\frac{\alpha_t\dot{\alpha}_t  + \beta_t \dot{\beta}_t \lambda}{\alpha_t^2  + \beta_t^2 \lambda}\right|^2{\rm d}t = \frac{1}{4}\int_0^1 \left| \frac{{\rm d}}{{\rm d}t}\log(\alpha_t^2  + \beta_t^2 \lambda)\right|^2 {\rm d}t\, .
    \end{equation*}
By Cauchy–Schwarz inequality, $A_2 \geq \frac{1}{4}\log^2 \lambda$ for $\lambda = \lambda^{(1)}$ or $\lambda^{(d)}$. Using the assumption and definition $\lambda^\star$, we have $A_2 \geq \frac{1}{4}\log^2\lambda^\star$. Similar to the discussion in Section \ref{sec-1d-gaussian}, the minimum can be achieved by taking $\frac{{\rm d}}{{\rm d}t}\log(\alpha_t^2  + \beta_t^2 \lambda^\star) = \log \lambda^\star$; the assumption $1\geq \lambda^{(1)}$ is used to verify the minimum. Taking $\alpha_t = \sqrt{1-\beta^2_t}$ then leads to the solution in \eqref{eqn-high-D-gaussian-alpha-beta}.
\end{proof}
Proposition \ref{prop-optimize-Lip-highD-Gaussian} shows that by adapting the interpolation schedules, the Lipschitz constant of the drift field depends on the magnitude of eigenvalues logarithmically, compared to algebraically when using the simple schedule $\alpha_t = 1-t, \beta_t = t$. This is similar to the discussion for the 1D case in Section \ref{sec-1d-gaussian}.
For non-Gaussian targets where the eigenvalues of $M$ are not directly available, the parameter $\lambda^\star$ can be set from a Gaussian reference: in our experiments on stochastic PDE invariant measures (Section~\ref{sec-numerical-ns}), we use the ratio between the data and noise spectra at the finest resolved frequency as a data-driven proxy for $\lambda^\star$.
\begin{newremark}[Discussions on matrix-valued schedules]
\label{remark-mtx-valued-schedules}
If we allow matrix-valued schedules, it is possible to further improve numerical efficiency by adapting the schedule to each eigenvalue individually. In detail, consider the following choice:
\[\alpha_t = U\mathrm{diag}(\alpha^{(1)}_t, \ldots, \alpha^{(d)}_t)U^T, \quad \beta_t = U\mathrm{diag}(\beta^{(1)}_t, \ldots, \beta^{(d)}_t)U^T\, ,\]
where 
\[\alpha_t^{(k)} =\sqrt{\frac{\lambda^{(k)}-(\lambda^{(k)})^t}{\lambda^{(k)}-1}}, \quad \beta_t^{(k)} =   \sqrt{\frac{(\lambda^{(k)})^t - 1}{\lambda^{(k)}-1}}\, . \]
When $\lambda^{(k)}=1$, we interpret this formula through the limit $\lambda^{(k)} \to 1$. Direct calculation using this formula yields
\begin{equation*}
\begin{aligned}
    b_t(x) = \mathrm{Cov}(\dot{I}_t,I_t)\mathrm{Cov}(I_t)^{-1}x &= (\dot{\alpha}_t\alpha_t^T  + \dot{\beta}_t M\beta_t^T )(\alpha_t\alpha_t^T  + \beta_t M\beta_t^T)^{-1}x\\
    & = \frac{1}{2} U \mathrm{diag}(\log \lambda^{(1)}, \ldots, \log \lambda^{(d)}) U^Tx\, .
\end{aligned}
\end{equation*}
Here, each eigenvector direction corresponds to its individual Lipschitz constant $|\log \lambda^{(i)}|$ for $1\leq i \leq d$, and not all scales suffer from the largest $|\log \lambda^\star|$. We leave the investigation of matrix-valued schedules for future study.
\end{newremark}

\begin{example}[Extension to log-concave distributions]
We can generalize the discussion of high-dimensional Gaussians to log-concave distributions. Let $\mu^* \propto \exp(-V)$ with $V\in C^2(\mathbb{R}^d)$ and $\lambda_{m} I  \preceq \nabla^2 V  \preceq \lambda_{M}I$ where we assume $\lambda_{m}\geq 1$. Consider $x_1 \sim \mu^*$ independent of $z \sim \sfN(0,\mathrm{I})$. Then for the linear interpolant with scalar schedule $I_t = \alpha_t z + \beta_t x_1$, we have
\begin{equation*}
    \frac{\alpha_t\dot{\alpha}_t  + \beta_t \dot{\beta}_t \lambda_{M}^{-1}}{\alpha_t^2  + \beta_t^2 \lambda_{M}^{-1}} \preceq \nabla b_t(x) \preceq \frac{\alpha_t\dot{\alpha}_t  + \beta_t \dot{\beta}_t \lambda_m^{-1}}{\alpha_t^2  + \beta_t^2 \lambda_m^{-1}}\, .
\end{equation*}
This can be proved using the Cram\'er--Rao and Brascamp--Lieb inequalities; see \cite{gao2024gaussian}. Therefore, similar to the Gaussian case, we can choose $\lambda^\star = \lambda_{M}^{-1}$. Then, with the schedule 
\begin{equation}
    \alpha_t =\sqrt{\frac{\lambda^\star-(\lambda^\star)^t}{\lambda^\star-1}}, \quad \beta_t =   \sqrt{\frac{(\lambda^\star)^t - 1}{\lambda^\star-1}}\, ,
\end{equation}
we have $\|\nabla b_t(x)\|_2 \leq \frac{1}{2}|\log \lambda^\star|$. In general, we do not know an explicit solution for optimizing $A_2$ for log-concave distributions. However, the above schedule serves as a good choice, and the bound is tight and yields the optimal $A_2$ when the log-concave distribution is Gaussian.
\end{example}

\begin{example}[A particular example on high dimensional Gaussian mixtures] 
\label{example-d-dim-gmm}
Consider the bimodal Gaussian mixture in $d$ dimensions
\begin{equation}
\label{eqn-d-dim-gmm}
    \mu^*(x) = p\sfN(x;r, \mathrm{I}) + (1-p)\sfN(x;-r, \mathrm{I})\, ,
\end{equation}
where $x \in \bR^d$, and $r \in \bR^d$ is a fixed vector satisfying $\|r\|_2 = \sqrt{d}$; for instance, $r = (1,1,...,1)^T$. The interpolant $I_t = \alpha_t z + \beta_t x_1$ where $z \sim \sfN(0,\mathrm{I}) \perp x_1 \sim \mu^*$.

Using the general formula in Appendix \ref{appendix-GMM}, we get $b_t(x) = \dot\beta_t r \mathrm{tanh}(h+\beta_t  \langle r, x\rangle)$. Then $\nabla b_t (x) = \dot\beta_t\beta_t rr^T \operatorname{sech}^2(h+\beta_t \langle r, x\rangle)$, which yields
\[ \|\nabla b_t(x)\|_2^2 = d \dot\beta_t^2\beta_t^2 \operatorname{sech}^4(h+\beta_t \langle r, x\rangle) \, . \]
This is effectively the same as the 1D example in Proposition \ref{min-lip 1d gaussian mixture}. Using the result there, we get that the optimal $\beta_t, \alpha_t = \sqrt{1-\beta_t^2}$ minimizing $A_2$ satisfies
\[ t = \frac{\int_0^{\beta_t} u (G(u))^{1/2} {\rm d}u}{\int_0^1 u (G(u))^{1/2} {\rm d}u}\, . \]
where $G(u) = \bE[\operatorname{sech}^4(h+u \langle r, \sqrt{1-u^2}z + ux_1\rangle)]$. Again, an approximate solution is 
\begin{equation}
\label{eqn-gmm-d-dim-approx-min-lip}
    \beta_t = \frac{1}{\sqrt{d}}\sqrt{-\log(1+(e^{-d}-1)t)} \, .
\end{equation}
\end{example}

Beyond the above examples, we have a general formula for optimizing $A_2$ over scalar interpolation schedules, for general distributions.
\begin{example}[Optimizing avg-Lip$^2$ for general distributions]
Consider a general distribution $\mu^*$ in $d$ dimensions and we assume it to be smooth for simplicity. Let $b^\dagger(x)$ be defined as in Proposition \ref{prop-from-one-schedule-to-another} and let $\alpha_t = \sqrt{1-\beta_t^2}$. Then 
using Proposition \ref{prop-from-one-schedule-to-another}, 
    \[b_t(x) = \dot\beta_t \left(\frac{-\beta_t}{1-\beta_t^2} x  + \frac{1}{1-\beta_t^2}\left((1-t^\dagger)b^\dagger_{t^{\dagger}}(\frac{t^\dagger}{\beta_t}x)+x\right)\right)\, ,\]
    and
    \[\nabla b_t(x) = \dot\beta_t \left(\frac{-\beta_t}{1-\beta_t^2} \mathrm{I}  + \frac{1}{1-\beta_t^2}\left((1-t^\dagger)\frac{t^\dagger}{\beta_t}\nabla b^\dagger_{t^{\dagger}}(\frac{t^\dagger}{\beta_t}x)+\mathrm{I}\right)\right) = \dot\beta_t F(\beta_t,x)\, , \]
    where we denote the term in the big bracket by $F(\beta_t,x)$. Then
    \[A_2 = \int_0^1 \bE[\|\nabla b_t(I_t)\|_2^2] = \int_0^1 \dot\beta_t^2 \bE[\|F(\beta_t,I_t)\|_2^2] {\rm d}t = \int_0^1 \dot\beta_t^2 G(\beta_t) {\rm d}t \, , \]
    where we denote $G(\beta_t) = \bE[\|F(\beta_t,I_t)\|_2^2]$. Solving the Euler-Lagrange equation with the Beltrami Identity (see Appendix \ref{sec-proof-min-lip-gmm}) leads to the equation that $\beta_t$ satisfies:
    \[ t = \frac{\int_0^{\beta_t} (G(u))^{1/2} {\rm d}u}{\int_0^1 (G(u))^{1/2} {\rm d}u}\, . \]
    In general, finding the optimal $\beta_t$ analytically is challenging. While numerical solutions are possible once $b^\dagger$ is available, it is computationally costly in high dimensions as we need to evaluate $G$. Our previous examples demonstrate that certain cases allow for simpler solutions. In particular, we have an analytic formula for the Gaussian case. For Gaussian mixture distributions, we can derive approximate analytical solutions, and for log-concave cases, we can leverage insights from the Gaussian analysis to construct schedules that achieve our numerical objectives.
    
\end{example}

\section{Numerical Demonstrations}
\label{sec-numerical-demo}
We now turn to numerical experiments. We consider four target distributions: high-dimensional Gaussian random fields, high-dimensional Gaussian mixtures, and the invariant measures of the stochastic Allen--Cahn and Navier--Stokes equations.
These range from exactly Gaussian to strongly non-Gaussian, and the goal is to test how well the schedule design developed analytically in Section~\ref{sec-numerical-design} carries over to settings of practical interest.

In all experiments, the generative process is the ODE associated with the linear stochastic interpolant; the designed schedule is applied at inference time via the transfer formula (Proposition~\ref{prop-from-one-schedule-to-another}), so the same trained drift is reused across schedules.
For Gaussian and Gaussian-mixture targets, the drift is available in closed form and we evaluate it directly; for the Allen--Cahn and Navier--Stokes targets, the drift is approximated by a UNet \cite{ho2020denoising} trained with the linear schedule, with training procedure and hyperparameters following \cite{chen2024probabilistic}.
For numerical stability, the ODE is integrated from $t_{\min}=10^{-3}$ to $t_{\max}=1-10^{-3}$ throughout.
Code to reproduce all results is available at
\url{https://github.com/yifanc96/GenerativeDynamics-NumericalDesign.git}; experimental details for the Navier--Stokes case are collected in Appendix~\ref{appendix-ns-experiment-details}.

Sampling accuracy is assessed via spectral diagnostics.
For a generated sample $u$ (one- or two-dimensional in this paper), we compute the radially binned energy spectrum
\[
E(k)=\sum_{k\le |m|_2<k+1}|\hat u(m)|^2,
\]
where $\hat u(m)$ are the Fourier coefficients; in the Navier--Stokes case with vorticity formulation, this is the enstrophy spectrum.
Spectra are averaged over an ensemble of independently generated samples.

\subsection{Gaussians}
We first consider a Gaussian random field with distribution
$\sfN(0,\sigma^2(-\Delta+\tau^2 \mathrm{I})^{-s})$ on $D=[0,1]^2$, where $-\Delta$ denotes the Laplacian with homogeneous Dirichlet boundary conditions.
We fix $s=3$, $\tau=1$, and $\sigma^2=(4\pi^2+\tau^2)^s$, so that the field exhibits significant scale separation across Fourier modes.
Target samples $x_1$ are drawn exactly from this distribution.

The noise variable $z$ in the stochastic interpolant is sampled from spatial white noise, corresponding to the Gaussian random field with $s=0$ and unit variance.
The field is discretized on a uniform $N\times N$ grid, and the resulting generative ODE is integrated using a fixed-step fourth-order Runge--Kutta (RK4) method.

We compare the standard linear schedule $\beta_t=t$ with the designed schedule \eqref{eqn-high-D-gaussian-alpha-beta}, which minimizes avg-Lip$^2$ for Gaussian targets.
Figure~\ref{fig:gaussian-fields-schedules} shows representative samples at resolution $N=128$ generated using 20 RK4 steps.
The designed schedule produces visibly smoother and more coherent samples.
The right panel illustrates the schedules themselves, highlighting the rapid initial growth of the designed schedule, which reflects the fast speed required at early times.

Figure~\ref{fig:gaussian-spectrum} compares the energy spectra of the true distribution and generated samples across resolutions.
The designed schedule yields significantly more accurate spectra and maintains accuracy as $N$ increases (due to the logarithmic scaling), whereas the linear schedule deteriorates under refinement, reflecting its poorer numerical conditioning.

\begin{figure}[ht]
    \centering
    \includegraphics[width=0.32\linewidth]{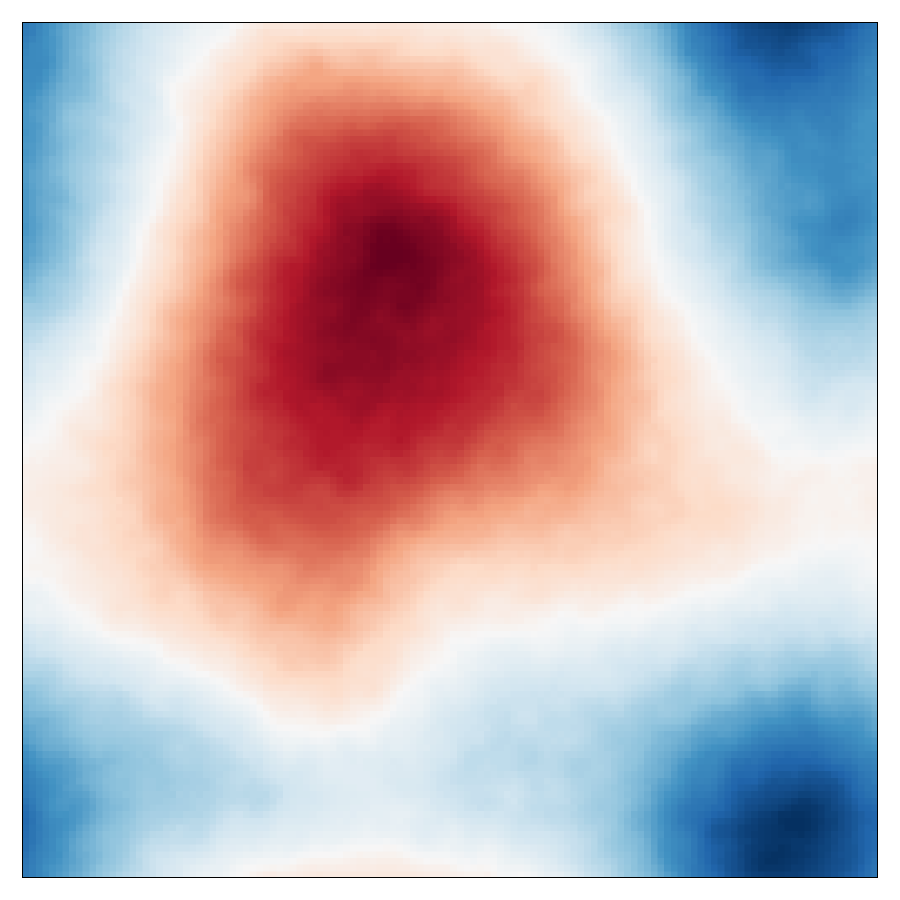}
    \includegraphics[width=0.32\linewidth]{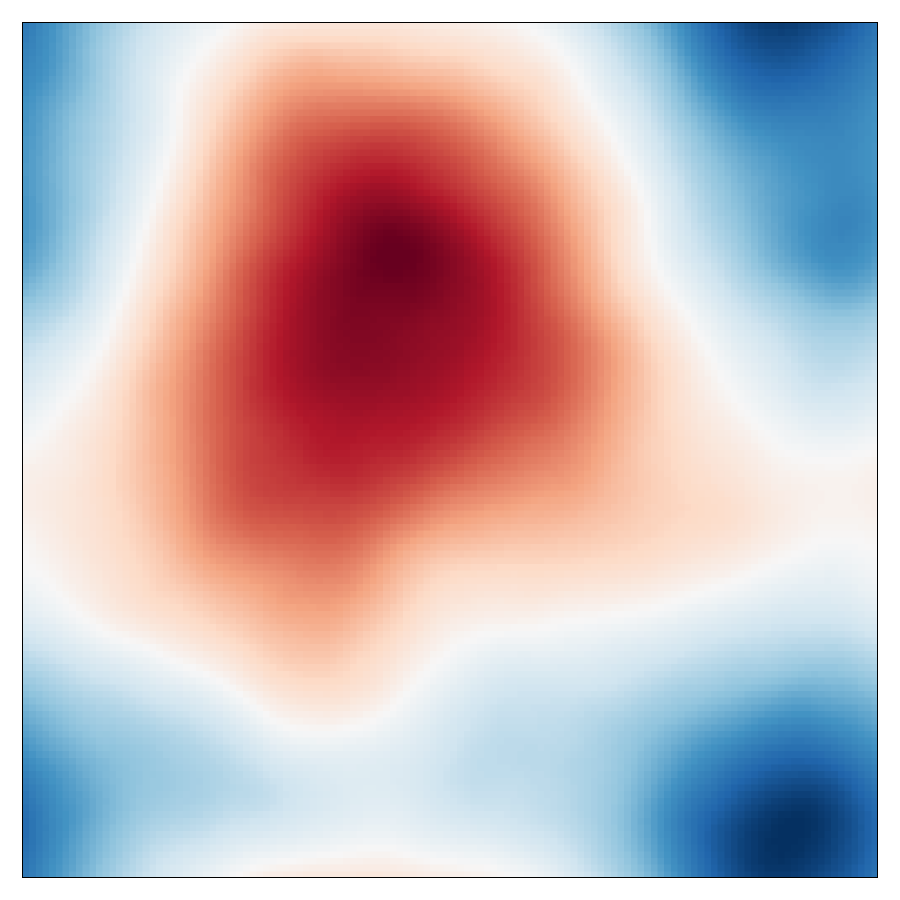}
    \includegraphics[width=0.32\linewidth]{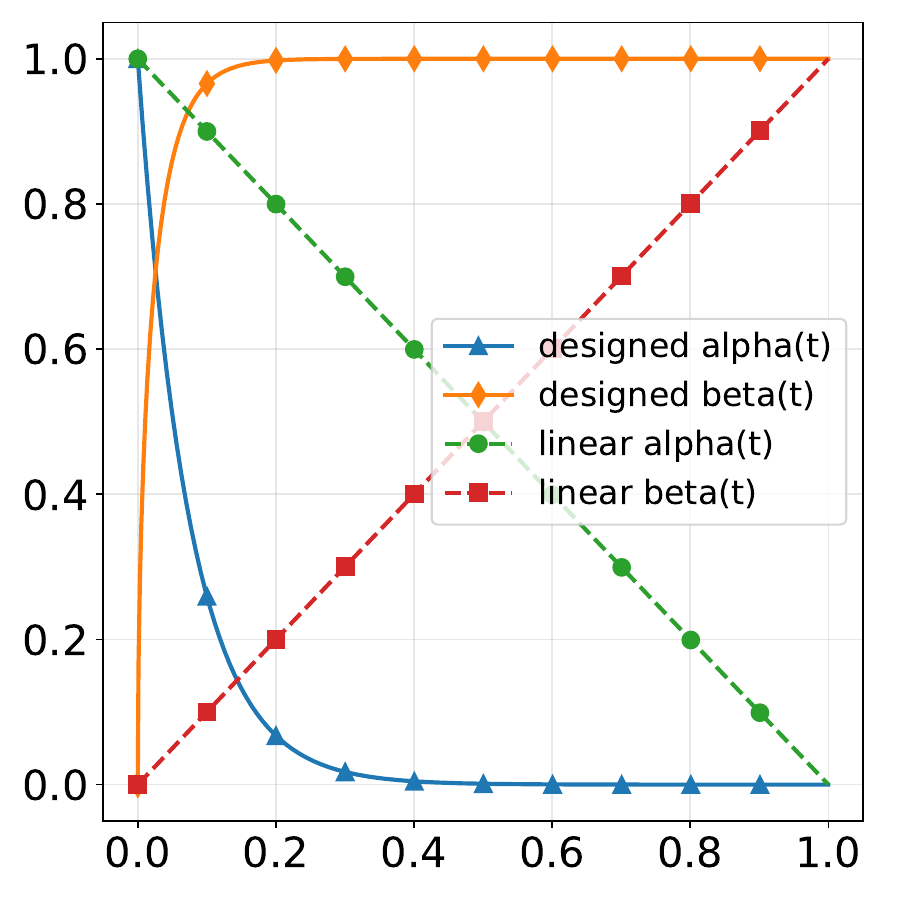}
    \caption{Left: $128\times 128$ Gaussian fields generated by using linear schedules with $20$ steps of the RK4 integrator. Middle: $128\times 128$ Gaussian fields generated by using the designed schedules with $20$ steps of the RK4 integrator. Right: linear and designed schedules.}
    \label{fig:gaussian-fields-schedules}
\end{figure}
\begin{figure}[ht]
    \centering
    \includegraphics[width=0.32\linewidth]{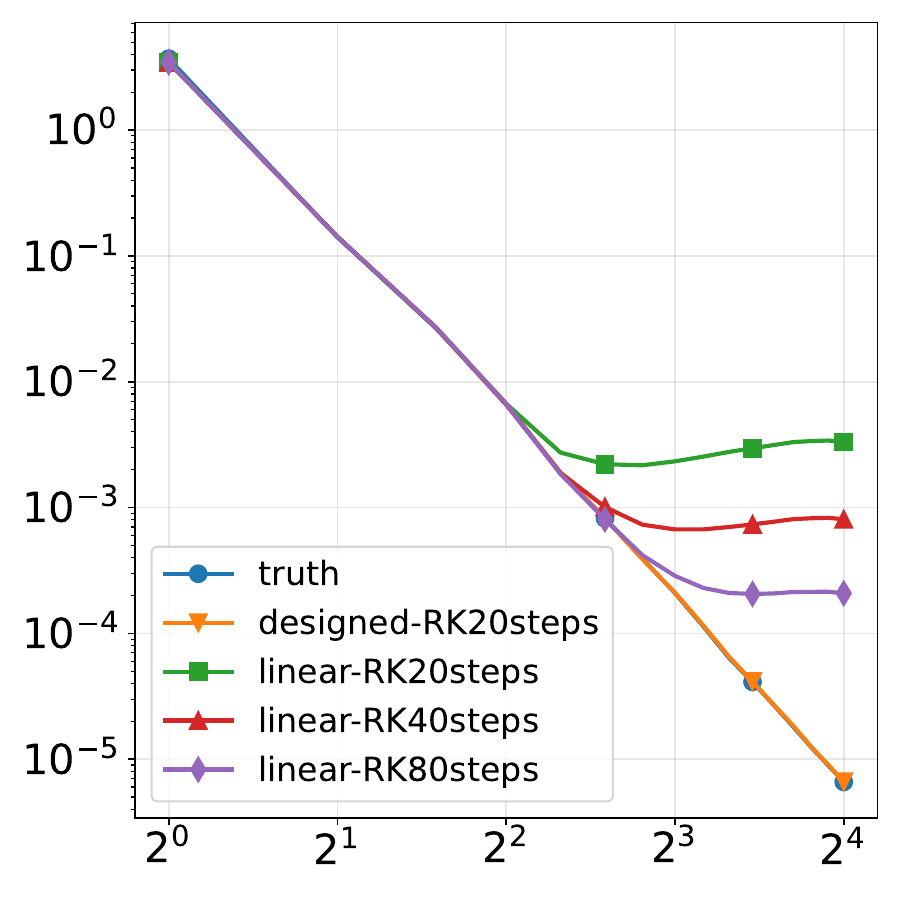}
    \includegraphics[width=0.32\linewidth]{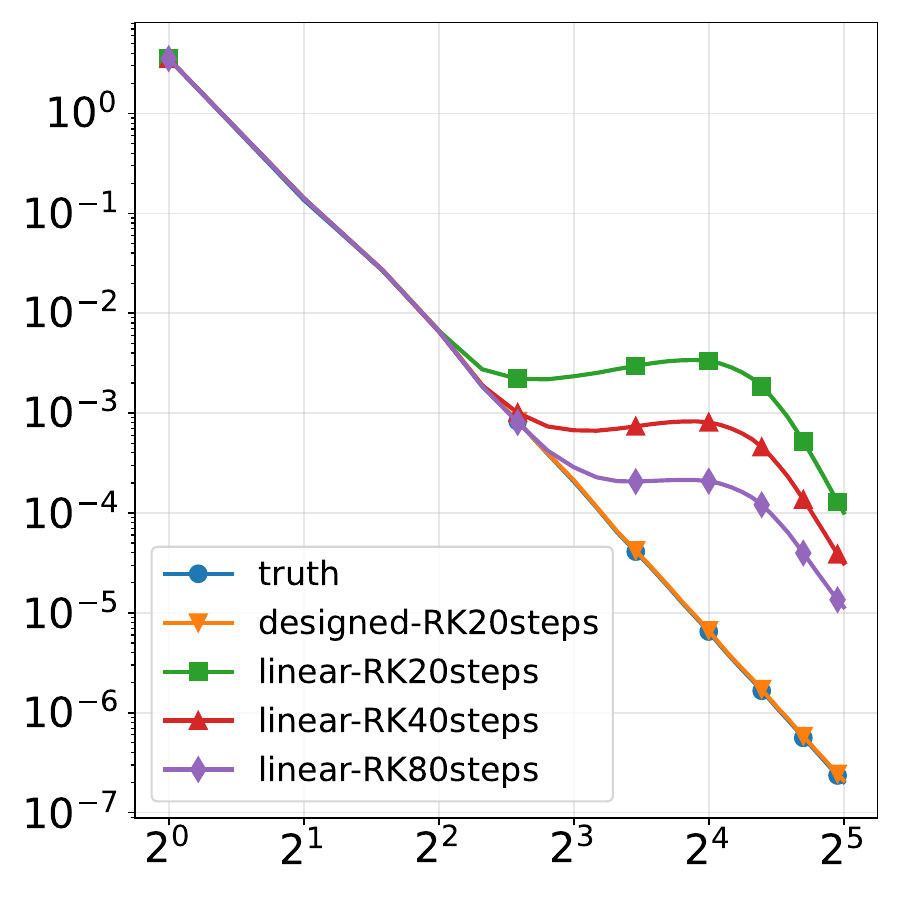}
    \includegraphics[width=0.32\linewidth]{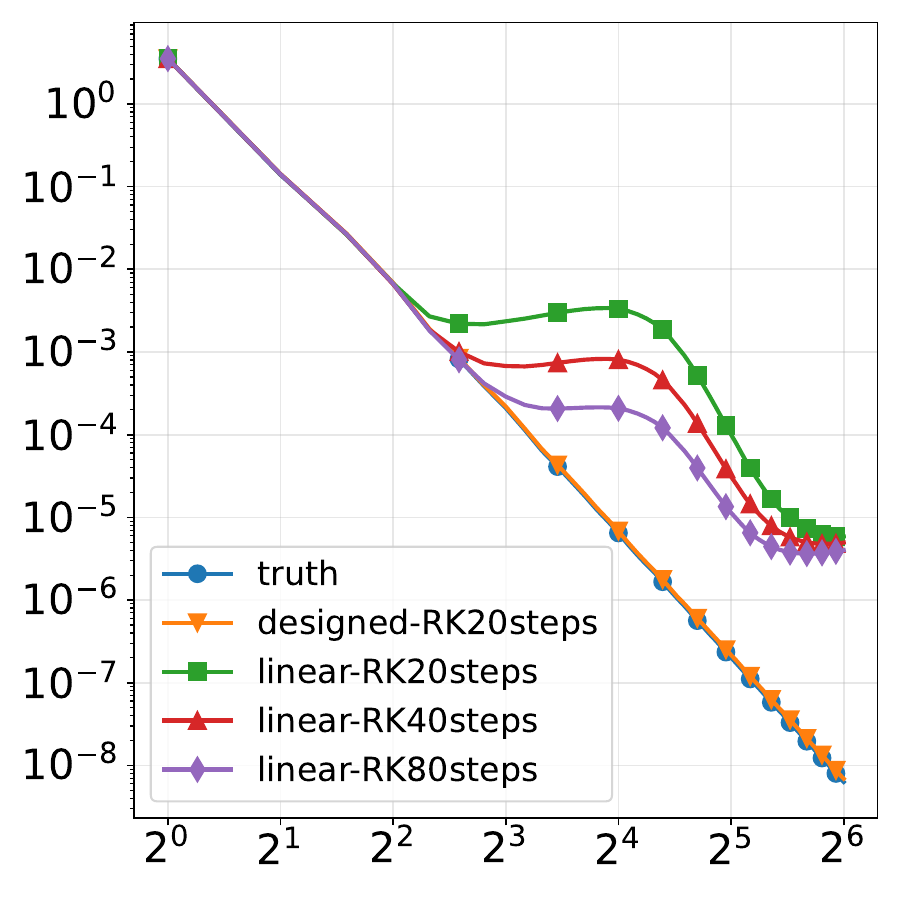}
    \caption{Energy spectra of Gaussian fields: comparison between truth, generated via designed schedules or standard linear schedules, with $20, 40$ or $80$ RK4 steps. The three figures correspond to different resolutions. Left: $32\times 32$; middle: $64\times 64$; right: $128\times 128$.}
    \label{fig:gaussian-spectrum}
\end{figure}
\subsection{Gaussian mixtures}
\label{sec-exp-gmm}
We next consider the high-dimensional Gaussian mixture distribution defined in \eqref{eqn-d-dim-gmm} with dimension $d=1000$, mixture weight $p=0.3$, and mean vector $r=(1,1,\ldots,1)\in\mathbb{R}^d$.
This distribution is strongly non-Gaussian and exhibits well-separated modes.
The noise variable $z$ is sampled from $\sfN(0,\mathrm{I})$.

We compare the linear schedule $\beta_t=t$ with the approximate min-avg-Lip$^2$ schedule \eqref{eqn-gmm-d-dim-approx-min-lip}, taking $\alpha_t=\sqrt{1-\beta_t^2}$ in both cases.
The drift field is given analytically by Example~\ref{example-d-dim-gmm}, allowing us to isolate the numerical effects of the schedule without learning error.
The ODE is integrated using only 2, 3, or 4 RK4 steps, with $10^4$ independent samples generated for each setting.

To assess mode recovery, we project the generated samples onto one dimension using PCA and fit a one-dimensional bimodal Gaussian mixture model.
Table~\ref{table-gmm-weights} reports the estimated smaller mixture weight.
The linear schedule frequently collapses to a single mode when using few steps, while the approximate min-avg-Lip$^2$ schedule accurately recovers both modes even under extremely coarse discretization.

\begin{table}[]
\centering
\begin{tabular}{cccc}
\hline
    & Truth & Linear schedule & Approx min-avg-Lip$^2$ schedule \\ \hline
 $2$ RK4 steps & $0.3$ & $0.00$          & $0.42$                         \\ 
$3$ RK4 steps & $0.3$ & $0.03$          & $0.26$                         \\ 
$4$ RK4 steps & $0.3$ & $0.09$          & $0.27$                         \\ \hline
\end{tabular}
\caption{True and estimated weights of one mode recovered from the samples (values reported to 2 decimal places). We obtain two weights since we fit a bimodal GMM, and we always report the smaller weight.}
\label{table-gmm-weights}
\end{table}

\subsection{Invariant distributions of stochastic Allen-Cahn}
We consider the invariant distribution of the stochastic Allen--Cahn equation on $[0,1]$, formally given by
\[
\exp\!\left(-\int_0^1 \frac12(\partial_x u(x))^2+V(u(x))\,\mathrm{d}x\right),
\qquad
V(u)=(1-u^2)^2.
\]
This distribution is bimodal and moderately non-Gaussian, with samples concentrating near $u=\pm1$.
We discretize the equation using finite differences on $N$ equispaced grid points, yielding an $N$-dimensional target distribution.

Samples $x_1$ from the invariant distribution are generated using ensemble MCMC methods \cite{chen2025new}, while $z$ is sampled from spatial white noise.
The drift is trained under the linear schedule, and the designed schedule is applied at inference time via the transfer formula (Proposition~\ref{prop-from-one-schedule-to-another}); we then compare the resulting energy spectra.

The designed schedule is obtained by treating the Gaussian reference measure \[\exp\!\left(-\int_0^1 \tfrac12(\partial_x u)^2\,\mathrm{d}x\right)\] as a proxy for the covariance structure and applying the avg-Lip$^2$-optimal schedule of Proposition~\ref{prop-optimize-Lip-highD-Gaussian}.
Figure~\ref{fig:allen-cahn-spectrum} shows that this schedule yields consistently more accurate energy spectra than the linear schedule, and that the improvement is robust across spatial resolutions.
With only $10$ RK4 steps, the designed schedule produces consistent fine-scale spectra across resolutions $N=32,64,128$, in line with the logarithmic dependence of the drift Lipschitz constant on resolution; the linear schedule, by contrast, exhibits a polynomial growth of stiffness and requires substantially more steps to reach comparable accuracy.

\begin{figure}[ht]
    \centering
    \includegraphics[width=0.32\linewidth]{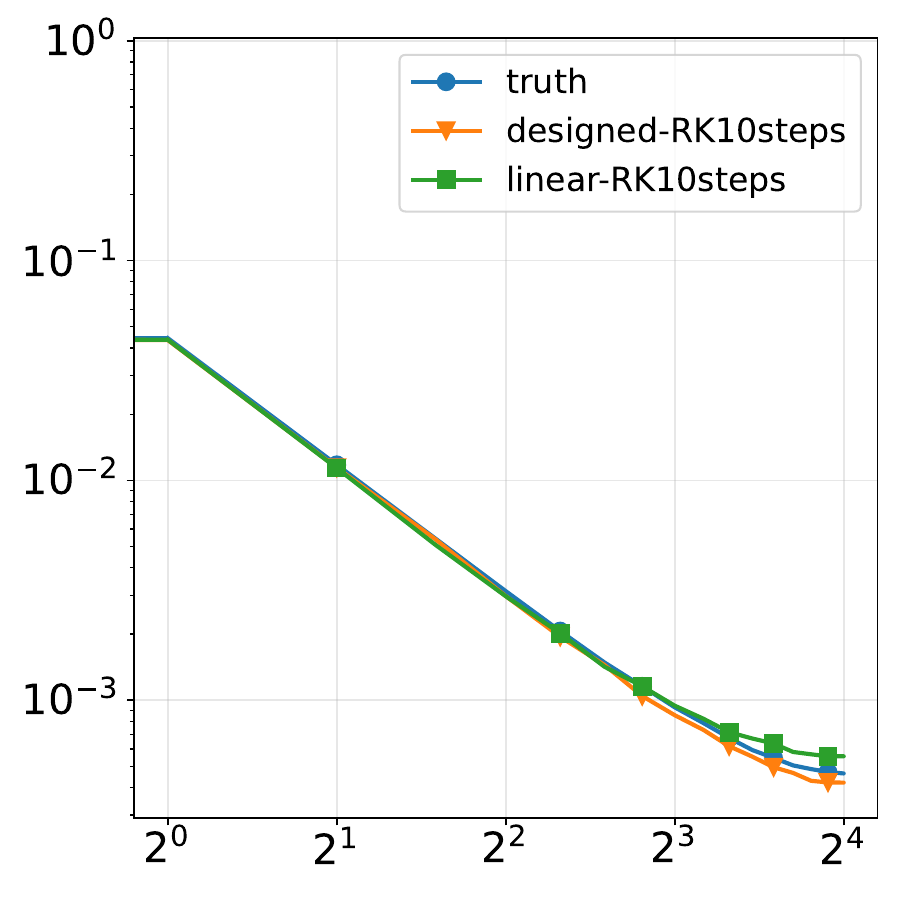}
    \includegraphics[width=0.32\linewidth]{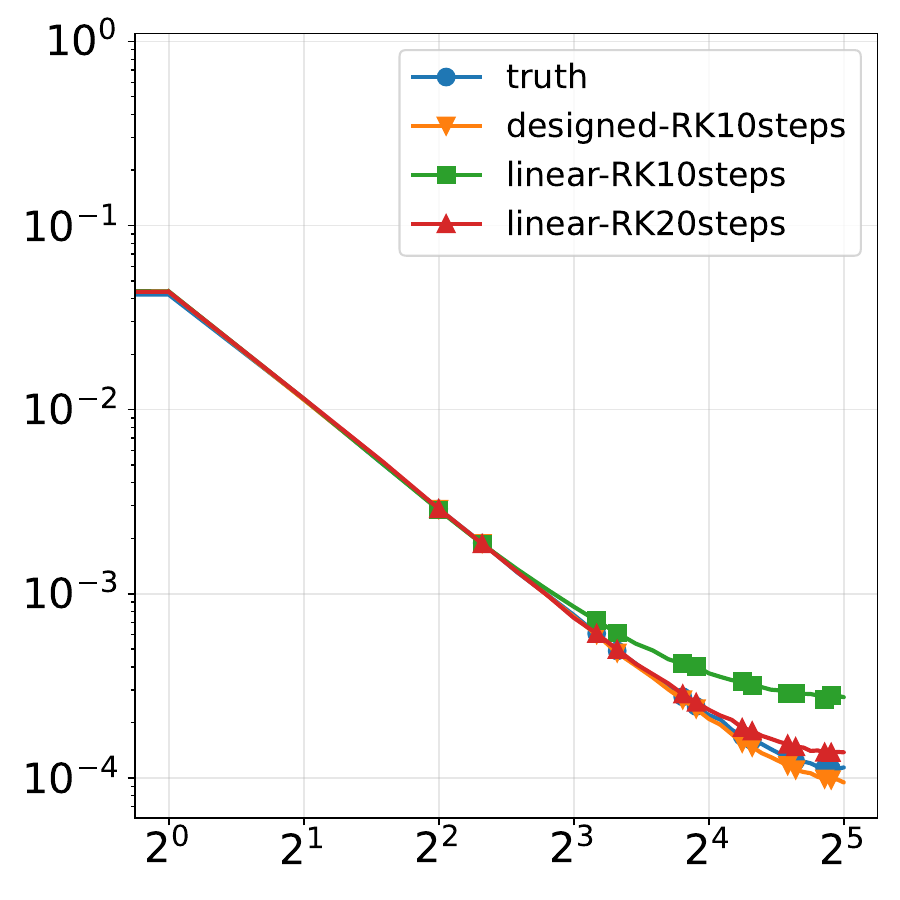}
    \includegraphics[width=0.32\linewidth]{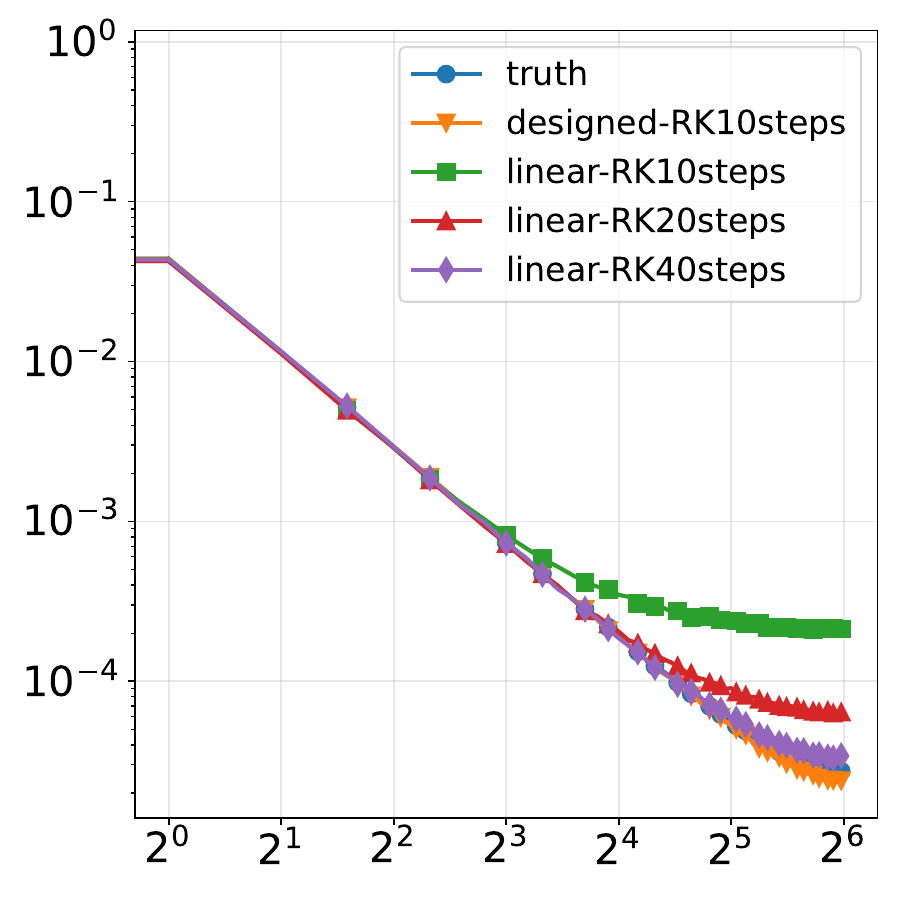}
    \caption{Energy spectra of invariant distributions of stochastic Allen-Cahn: comparison between truth, generated via designed schedules or standard linear schedules, with $10, 20$ or $40$ RK4 steps. The three figures correspond to different resolutions. Left: $32$; middle: $64$; right: $128$.}
    \label{fig:allen-cahn-spectrum}
\end{figure}

\subsection{Invariant distributions of stochastic Navier-Stokes}
\label{sec-numerical-ns}

Finally, we consider invariant distributions of the two-dimensional stochastically forced Navier--Stokes equations on the torus $\mathbb{T}^2 = [0,2\pi]^2$.
Using the vorticity formulation, the dynamics are given by
\begin{equation}
    \label{eq:2D_vorticity_NS}
    \mathrm{d}\omega + v \cdot \nabla\omega\,\mathrm{d}t
    = \nu \Delta\omega\,\mathrm{d}t - \alpha\omega\,\mathrm{d}t + \varepsilon\,\mathrm{d}\eta,
\end{equation}
where $v = \nabla^{\perp} \psi = (-\partial_y\psi, \partial_x\psi)$ is the incompressible velocity field associated with the stream function $\psi$ satisfying $-\Delta \psi = \omega$.
We fix the parameters to $\nu = 10^{-3}$, $\alpha = 0.1$, and $\varepsilon = 1$.
The stochastic forcing $\eta$ is white in time and acts on a finite number of low-frequency Fourier modes, following the setup in \cite{chen2024probabilistic}.
Under these choices, the system is ergodic and admits a unique invariant probability measure \cite{hairer2006ergodicity}.

Samples from the invariant distribution are generated via long-time numerical simulation of \eqref{eq:2D_vorticity_NS} using a pseudo-spectral method with standard de-aliasing.
After an initial burn-in period sufficient for equilibration, vorticity snapshots are collected at regular time intervals to form an approximately independent dataset of samples.
In the stochastic interpolant framework, samples $x_1$ are drawn from this empirical invariant distribution, while the initial condition $z$ is sampled from a spatial Gaussian random field.
We train an ODE-based generative model to transport $z$ to $x_1$ over unit time using a drift field parameterized by a UNet architecture; full architectural and training details are reported in Appendix~\ref{appendix-ns-experiment-details}.

To probe the effect of interpolation schedules, we compare the linear schedule $\beta_t=t$ with the designed schedule \eqref{eqn-high-D-gaussian-alpha-beta}.
The schedule parameter $\lambda^\star$ in Proposition~\ref{prop-optimize-Lip-highD-Gaussian} is the smallest eigenvalue of the target covariance, but for the highly non-Gaussian invariant measure of \eqref{eq:2D_vorticity_NS} it is not directly accessible.
We therefore use a data-driven proxy: at the finest resolved frequency $k_{\max}$,
\begin{equation}
    \label{eq:auto-lambda-ns}
    \lambda^\star = \frac{S_{\rm data}(k_{\max})}{S_{\rm noise}(k_{\max})},
\end{equation}
where $S_{\rm data}, S_{\rm noise}$ denote the radially-averaged enstrophy spectra of the data and the noise, respectively.
This rule encodes the same intuition as the eigenvalue ratio in the Gaussian case: it is the worst-case ratio of target to source variance among the resolved Fourier modes.
Empirically it gives $\lambda^\star \approx 3\times 10^{-4}$ at $64\times 64$ and $\lambda^\star \approx 10^{-5}$ at $128\times 128$.
The designed schedule is applied at inference time via the transfer formula (Proposition~\ref{prop-from-one-schedule-to-another}), so both schedules use the same trained drift; the ODE is integrated with a fixed-step RK4 method on a uniform grid in $[t_{\min}, t_{\max}]$.

\begin{figure}[ht]
    \centering
    \includegraphics[width=0.32\linewidth]{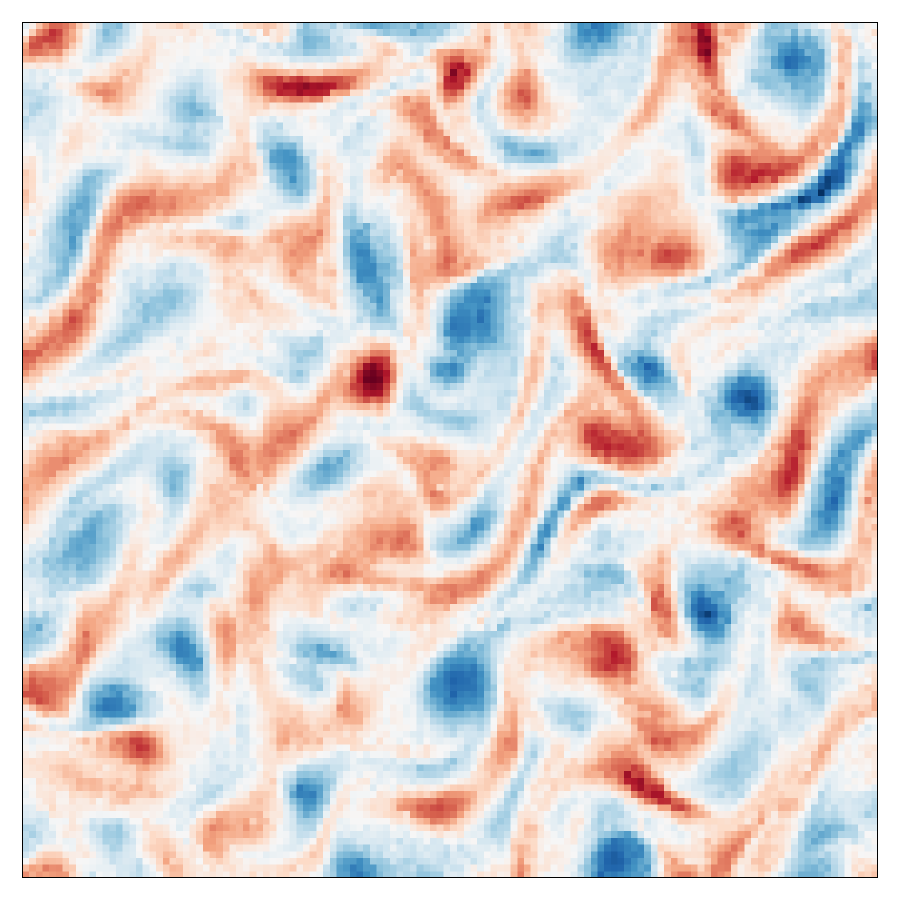}
    \includegraphics[width=0.32\linewidth]{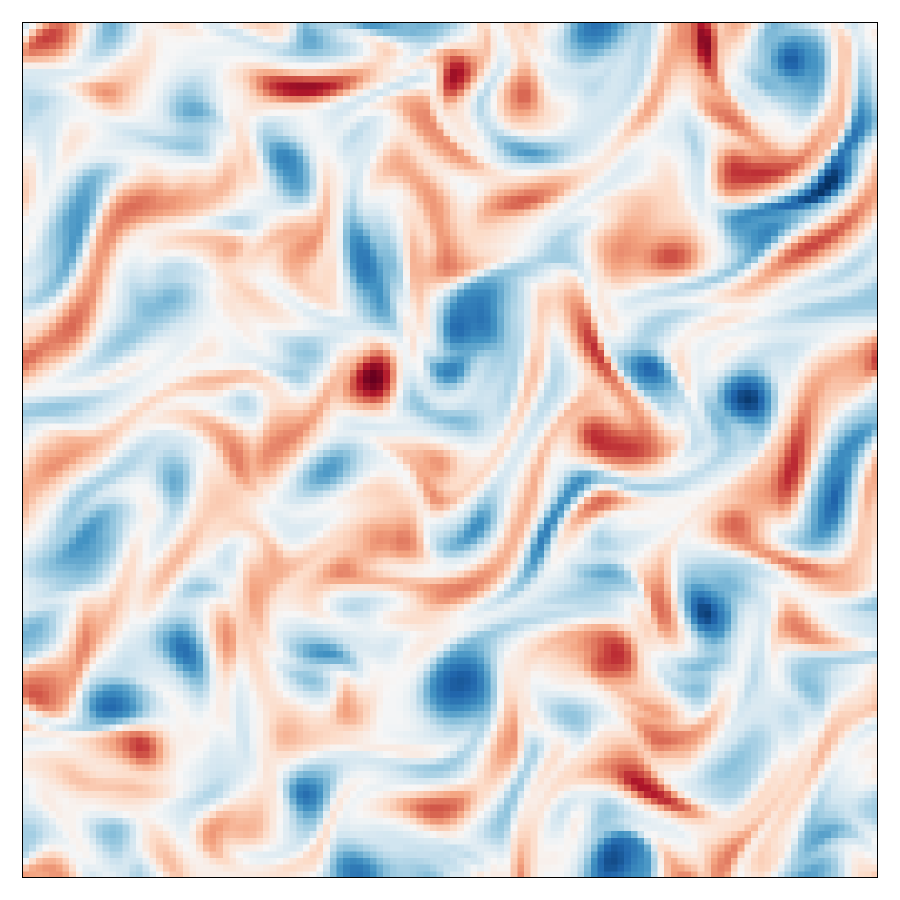}
    \includegraphics[width=0.32\linewidth]{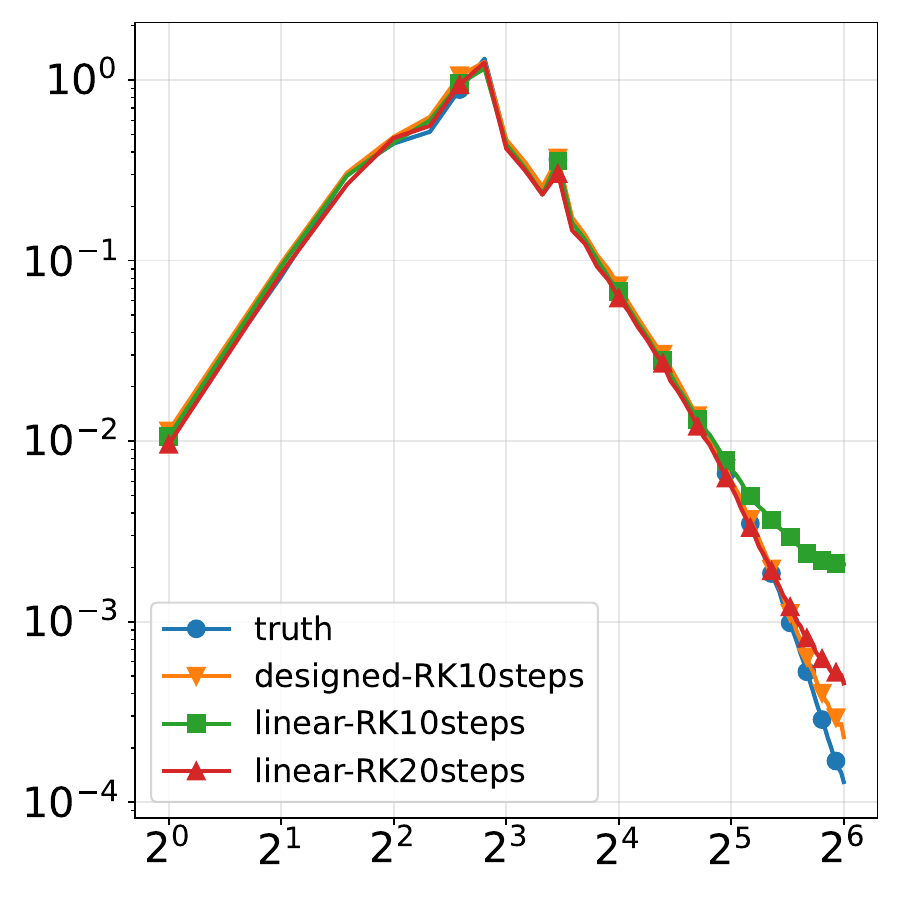}
    \caption{Comparison of generated $128\times 128$ vorticity fields using $10$ RK4 steps. Left: linear schedule; middle: designed schedule; right: enstrophy spectra of $500$ samples generated with each schedule, against the truth.}
    \label{fig:NS-spectrum}
\end{figure}

Figure~\ref{fig:NS-spectrum} illustrates representative generated samples and ensemble-averaged enstrophy spectra at $128\times 128$ with $10$ RK4 steps.
The designed schedule yields visibly smoother samples without spurious fine-scale artifacts and reproduces the enstrophy spectrum across all frequencies, while the linear schedule overestimates fine-scale energy by orders of magnitude and requires more than $20$ steps to reach a similar level of accuracy.

To quantify accuracy beyond the global spectrum, we report relative enstrophy errors decomposed into mid ($8\le k<24$) and high ($k\ge 24$) wavenumber bands.
For a band $B$, the relative error is
$\frac{1}{|B|}\sum_{k\in B}|S_{\rm gen}(k)-S_{\rm truth}(k)|/|S_{\rm truth}(k)|$.

Figure~\ref{fig:NS-perband-error} and Table~\ref{tab:ns-perband} report band errors as a function of RK4 step count for both $64\times 64$ and $128\times 128$.
The designed schedule consistently dominates the linear schedule.
The improvement is most dramatic at high wavenumbers: at $128\times 128$ with $10$ RK4 steps, the linear schedule yields a relative error above $350\%$ in the high band, while the designed schedule reaches $16\%$ -- a $20\times$ reduction.
The mid-band error of the designed schedule is also roughly half of the linear schedule's at every step count, and barely changes with more RK4 steps -- already at $10$ steps it has converged.
In contrast, the linear schedule needs $50$ RK4 steps simply to bring the high-band error below $30\%$ at $128\times 128$.
The point is that, with the linear schedule, the large-$k$ modes are integrated with insufficient resolution and remain inaccurate, while the designed schedule allocates time appropriately and resolves them at small step counts.

\begin{figure}[ht]
    \centering
    \includegraphics[width=0.78\linewidth]{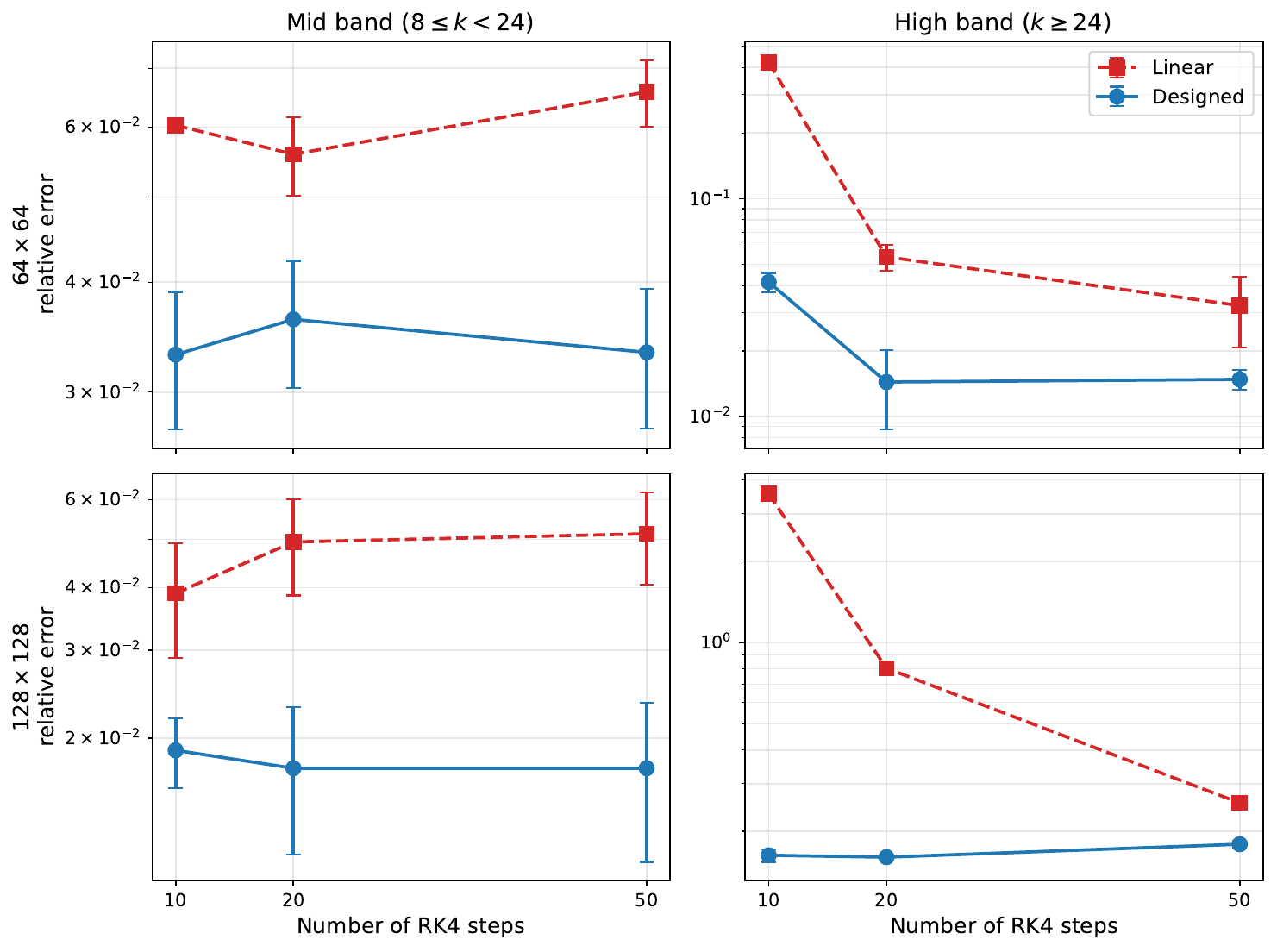}
    \caption{Per-band relative enstrophy error vs.\ number of RK4 steps for the linear schedule (red, dashed) and the designed schedule (blue, solid). Top row: $64\times 64$; bottom row: $128\times 128$. Left column: mid band ($8\le k<24$); right column: high band ($k\ge 24$). Mean and standard deviation over three random seeds, $500$ samples per seed. The designed schedule converges already at $10$ steps; the linear schedule needs many more steps to resolve the large-$k$ modes.}
    \label{fig:NS-perband-error}
\end{figure}

\begin{table}[ht]
\centering
\small
\begin{tabular}{llcc}
\hline
Resolution & Method (steps) & Mid ($8\le k<24$) & High ($k\ge 24$) \\ \hline
$64\times 64$ & Linear (10)   & $0.060 \pm 0.001$ & $0.421 \pm 0.011$ \\
$64\times 64$ & Linear (20)   & $0.056 \pm 0.006$ & $0.054 \pm 0.007$ \\
$64\times 64$ & Linear (50)   & $0.066 \pm 0.006$ & $0.032 \pm 0.012$ \\
$64\times 64$ & Designed (10) & $\mathbf{0.033 \pm 0.006}$ & $\mathbf{0.041 \pm 0.004}$ \\
$64\times 64$ & Designed (20) & $\mathbf{0.036 \pm 0.006}$ & $\mathbf{0.014 \pm 0.006}$ \\
$64\times 64$ & Designed (50) & $\mathbf{0.033 \pm 0.006}$ & $\mathbf{0.015 \pm 0.002}$ \\ \hline
$128\times 128$ & Linear (10)   & $0.039 \pm 0.010$ & $3.549 \pm 0.049$ \\
$128\times 128$ & Linear (20)   & $0.049 \pm 0.011$ & $0.803 \pm 0.013$ \\
$128\times 128$ & Linear (50)   & $0.051 \pm 0.011$ & $0.255 \pm 0.004$ \\
$128\times 128$ & Designed (10) & $\mathbf{0.019 \pm 0.003}$ & $\mathbf{0.163 \pm 0.009}$ \\
$128\times 128$ & Designed (20) & $\mathbf{0.017 \pm 0.006}$ & $\mathbf{0.160 \pm 0.005}$ \\
$128\times 128$ & Designed (50) & $\mathbf{0.017 \pm 0.006}$ & $\mathbf{0.179 \pm 0.005}$ \\ \hline
\end{tabular}
\caption{Relative enstrophy spectrum error in the mid and high wavenumber bands for the linear and designed schedules at resolutions $64\times 64$ and $128\times 128$. Mean and standard deviation reported over three independent random seeds, with $500$ generated samples per seed. Bold indicates the lower mean per (resolution, step count, band). Low-band ($k<8$) errors are dominated by model approximation and are similar across schedules; we omit them here.}
\label{tab:ns-perband}
\end{table}

\medskip\noindent\textbf{Non-Gaussian metrics.}
Spectra capture only second-order statistics, while the invariant distribution of \eqref{eq:2D_vorticity_NS} is strongly non-Gaussian.
To probe this, we evaluate the flatness $F(r)=S_4(r)/S_2(r)^2$ of vorticity increments at scales $r\in\{1,2\}$ pixels (with $S_p(r) = \mathbb{E}|\omega(\cdot+r)-\omega(\cdot)|^p$), the gradient kurtosis (equivalently $F(1)$), and the Kolmogorov--Smirnov distance between the empirical pixel distributions of generated and ground-truth samples.
A Gaussian field has flatness $3$; departures from $3$ quantify intermittency.
At $128\times 128$ (Table~\ref{tab:ns-nongaussian}), the designed schedule recovers the truth flatness to within $3\%$ at $10$ RK4 steps, whereas the linear schedule underestimates flatness by $13\%$ and reaches comparable accuracy only after $50$ steps.
The KS distance is also smaller for the designed schedule.
Despite the strongly non-Gaussian nature of the invariant measure, schedules optimized based on Gaussian analysis still provide significant improvements in fine-scale and intermittent statistics.

\begin{table}[ht]
\centering
\small
\begin{tabular}{lcccc}
\hline
Method (steps) & Flatness $r{=}1$ & Flatness $r{=}2$ & Gradient kurtosis & KS distance ($\times 10^{-3}$) \\ \hline
Truth          & $4.95$            & $4.34$            & $4.95$            & ---       \\ \hline
Linear (10)    & $4.29 \pm 0.02$ & $4.13 \pm 0.02$ & $4.29 \pm 0.02$ & $4.87$    \\
Linear (20)    & $4.67 \pm 0.03$ & $4.27 \pm 0.03$ & $4.67 \pm 0.03$ & $5.33$    \\
Linear (50)    & $4.69 \pm 0.03$ & $4.28 \pm 0.03$ & $4.69 \pm 0.03$ & $5.24$    \\
Designed (10)  & $\mathbf{4.82 \pm 0.03}$ & $\mathbf{4.31 \pm 0.03}$ & $\mathbf{4.82 \pm 0.03}$ & $\mathbf{4.71}$ \\
Designed (20)  & $\mathbf{4.83 \pm 0.03}$ & $\mathbf{4.32 \pm 0.03}$ & $\mathbf{4.83 \pm 0.03}$ & $\mathbf{4.58}$ \\
Designed (50)  & $\mathbf{4.83 \pm 0.03}$ & $\mathbf{4.32 \pm 0.03}$ & $\mathbf{4.83 \pm 0.03}$ & $\mathbf{4.61}$ \\ \hline
\end{tabular}
\caption{Non-Gaussian statistics of generated $128\times 128$ vorticity fields. Flatness $F(r)=S_4(r)/S_2(r)^2$ measures intermittency at increment scale $r$ (Gaussian baseline $3$); the gradient kurtosis equals $F(1)$. KS distance compares pixel-value distributions on $10^5$ random pixels. Mean and standard deviation reported over five seeds, $500$ samples per seed. Truth values evaluated on the test set.}
\label{tab:ns-nongaussian}
\end{table}

\section{Conclusions}
We have studied the design of interpolation schedules in flow and diffusion-based generative models within the stochastic interpolants framework, from a combined statistical and numerical perspective.

On the statistical side, we showed that all scalar interpolation schedules are equivalent under the Kullback--Leibler divergence in path space, once the diffusion coefficient is tuned a posteriori.
This indicates that, within the scalar class, schedule choice is not a purely statistical question and the criterion for selection must come from elsewhere.

On the numerical side, we proposed minimizing the averaged squared Lipschitzness of the drift, in contrast with kinetic-energy minimization in optimal transport.
A simple transfer formula expresses the drift of one scalar schedule in terms of the drift of another, so the designed schedule can be deployed at inference time on a model trained under a different (e.g., linear) schedule, without retraining.
For Gaussian targets the designed schedule gives an exponential reduction in the Lipschitz constant of the drift; for Gaussian-mixture targets it mitigates few-step mode collapse.
These analytical findings carry over to invariant measures of stochastic Allen--Cahn and Navier--Stokes equations, where the designed schedule yields more accurate fine-scale spectra at fixed integrator budget.

A natural direction for future work is to go beyond scalar schedules.
Matrix-valued, nonlinear, or instance-dependent schedules may break the statistical equivalence we established and offer further numerical regularization, although their analysis and parameterization raise additional questions.
Combining schedule design with higher-order or multiscale integrators, and with consistency or flow-map distillation, are other promising avenues.

\bibliography{ref}
\bibliographystyle{plain}

\appendix
\section{Sketch of Derivations for Stochastic Interpolants}
\label{appendix:derivation-stochastic-interpolants}
\begin{proof}[Sketch of derivation for Proposition \ref{prop-si-bt}]
   For any smooth test function $\phi:\bR^d \to \bR$, 
\begin{equation}
\label{eq:dcharact}
{\rm d} \phi( I_t) = \dot I_t \cdot \nabla \phi( I_t) {\rm d}t \, .
\end{equation}
We denote by $\mu(t,{\rm d}x)$ the measure of $I_t$. Then, 
\begin{equation}
    \int_{\bR^d} \phi(x) \mu(t,{\rm d}x)  = \bE[\phi(I_t)] = \bE[\phi(I_0)] + \int_0^t \bE[\dot I_s \cdot \nabla \phi( I_s)] {\rm d}s\, .
\end{equation}
Using the definition of conditional expectation, we have the identity
\begin{equation}
    \begin{aligned}
        \bE[\dot I_s \cdot \nabla \phi( I_s)] = \bE[\bE[\dot I_s|I_s] \cdot \nabla \phi( I_s)] = \int_{\bR^d} \bE[\dot I_s|I_s = x] \cdot \nabla \phi(x)\mu(s,{\rm d}x)\, .
    \end{aligned}
\end{equation}
Combining the above two equations lead to
\begin{equation}
    \int_{\bR^d} \phi(x) \mu(t,{\rm d}x) = \int_{\bR^d} \phi(x) \mu(0,{\rm d}x) + \int_0^t\int_{\bR^d} \bE[\dot I_s|I_s = x] \cdot \nabla \phi(x)\mu(s,{\rm d}x) {\rm d}s\, ,
\end{equation}
which implies $\mu(t,\cdot)$ is the weak solution to the transport equation corresponding to the ODE ${\rm d}X_t = b_t(X_t){\rm d}t$ with $b_t(x) = \bE[\dot{I}_t|I_t = x]$.
\end{proof}

\begin{proof}[Sketch of derivation for Proposition \ref{prop-si-tune-diffusion-coefs}]
    Assume the density of $I_t$ exists and denote it by $\rho_t$. By Proposition \ref{prop-si-bt}, $\rho_t$ satisfies the transport equation
    \[ \partial_t \rho_t + \nabla\cdot (\rho_t b_t) = 0\, . \]
    Using  the fact that $\nabla \cdot (\rho \nabla \log \rho) = \Delta \rho$, we can rewrite the equation as
    \[ \partial_t \rho_t + \nabla\cdot (\rho_t (b_t + \epsilon_t \nabla \log \rho_t) ) = \epsilon_t \Delta \rho_t\, , \]
    which is exactly the Fokker-Planck equation corresponding to the SDE 
    \[{\rm d}X_t = \left(b_t(X_t)+\epsilon_t \nabla \log \rho_t(X_t)\right){\rm d}t +\sqrt{2\epsilon_t}{\rm d}W_t\, .\]
\end{proof}

\begin{proof}[Sketch of derivation for \eqref{eqn-stein}]
    The second equation in \eqref{eqn-stein} follows directly from the first one. Here we derive the first one. Let us denote the density of $\beta_t x_1$ by $q_t$. Then $I_t$ is a Gaussian noisy version of $\beta_t x_1$, implying that
    \[ \rho_t (x) \propto \int_{\bR^d} q_t(y) \exp(-\frac{\|x-y\|_2^2}{2\alpha_t^2}) {\rm d}y\, .  \]
    Taking gradient yields  the formula
    \[ \nabla \log \rho_t(x) = \frac{1}{\int_{\bR^d} q_t(y) \exp(-\frac{\|x-y\|_2^2}{2\alpha_t^2}) {\rm d}y} \int_{\bR^d} (-\frac{x-y}{\alpha_t^2})q_t(y) \exp(-\frac{\|x-y\|_2^2}{2\alpha_t^2}) {\rm d}y\, . \]
    On the other hand, by the Bayes rule, we know that
    \[\frac{1}{\int_{\bR^d} q_t(y) \exp(-\frac{\|x-y\|_2^2}{2\alpha_t^2}) {\rm d}y} q_t(y) \exp(-\frac{\|x-y\|_2^2}{2\alpha_t^2}) \]
    is the density of the conditional distribution $\beta_t x_1 | \alpha_t z + \beta_t x_1 = x$. Therefore,
    \[ \nabla \log \rho_t(x) = \bE[-\frac{x-\beta_t x_1}{\alpha_t^2}|I_t = x] = -\bE[\frac{z}{\alpha_t}|I_t = x]\, .  \]
    This leads to the first formula in \eqref{eqn-stein}.
\end{proof}
\section{Discussion on SDEs with Singular Drift}
\label{appendix-well-defined-SDE-singular-drift}
In Section \ref{sec-Optimizing the KL in path space}, the optimal diffusion coefficient is
\[ \epsilon_t = \alpha_t^2(\frac{\dot\beta_t}{\beta_t}-\frac{\dot\alpha_t}{\alpha_t})\, . \]
With this choice and using the identities in \eqref{eqn-stein}, we obtain the following SDE
\[ {\rm d}X_t = (2b_t(X_t) - \frac{\dot\beta_t}{\beta_t}X_t){\rm d}t + \sqrt{2\epsilon_t}{\rm d}W_t\, . \]
For example, we take $\beta_t = t, \alpha_t = 1-t$, which yields
\[{\rm d}X_t = (2b_t(X_t) - \frac{1}{t}X_t){\rm d}t + \sqrt{2\frac{1-t}{t}}{\rm d}W_t\, .\]
The diffusion coefficient is singular and appears worrisome. However, note that
\[{\rm d}(tX_t) = 2tb_t(X_t){\rm d}t + \sqrt{2t(1-t)}{\rm d}W_t\, , \]
which implies that
\[X_t = \frac{1}{t}\int_0^t 2s b_s(X_s){\rm d}s + \frac{1}{t}\int_0^t \sqrt{2s(1-s)}{\rm d}W_s\, . \]
The last term is well defined as 
\[\frac{1}{t^2}\int_0^t 2s(1-s){\rm d}s = 1-\frac{2}{3}t \]
is non-singular as $t \to 0$. Therefore, the above stochastic integral equation is well defined. One can use Picard's iteration to prove the existence of a solution rigorously.
\section{Technical Details for Optimizing Averaged Squared Lipschitzness}
\label{appendix-technical-details-optm-lip}
\subsection{Optimal transport drift in the 1D Gaussian case}
\label{sec-OT-1D-Gaussian}
We provide a sketch of proof for claims made in Remark \ref{remark-OT-1D-Gaussian}. In the Gaussian setting, optimal transport theory implies that the optimal transport map satisfies $Tx = C_0^{-\frac{1}{2}}(C_0^{\frac{1}{2}}MC_0^{\frac{1}{2}})^{\frac{1}{2}}C_0^{-\frac{1}{2}}x = \sqrt{\frac{M}{C_0}}x$ in 1D. Therefore, the variance at time $t$ in the optimal transport path satisfies
\[C_t = ((1-t)I+tT) C_0((1-t)I+tT)^T\, . \]
Differentiation over $t$ leads to
\[\dot C_t = (T-I)C_0((1-t)I+tT)^T + ((1-t)I+tT) C_0(T-I)^T\, . \]
On the other hand, let $b_t(x) = A_t x$, then using the ODE $\dot x_t = A_t x_t$ and differentiating $C_t = \bE[x_t x_t^T]$ leads to the equation $\dot{C}_t = A_tC_t + C_t A_t^T$. Comparing the above two formulas for $\dot C_t$ implies
\[A_t = (T-I)((1-t)I+tT)^{-1}\, .\]
 For 1D, we obtain the formula
\[ b_t(x) = \frac{\sqrt{M}-\sqrt{C_0}}{(1-t)\sqrt{C_0}+t\sqrt{M}}x\, . \]
In particular, we take $C_0 = 1$ to get
\[b_t(x) = \frac{\sqrt{M}-1}{1-t+t\sqrt{M}}x\, .\]

\subsection{Formula for Gaussian mixtures}
\label{appendix-GMM}
We provide exact formula for the Gaussian mixture model (GMM).
\begin{proposition}
\label{prop-GMM-general-formula}
   Let the target density be a GMM with $J\in \bN$ modes
\begin{equation}
\label{eq:gmm}
\rho^\star(x) = \sum_{j=1}^J p_j {\sf N}(x;m_j,C_j)
\end{equation}
where $p_j\ge0$ with $\sum_{j=1}^J p_j =1$, $m_j \in \bR^d$, and $C_j = C_j^T \in \bR^d \times \bR^d$ positive-definite. Then 
\begin{equation}
\label{eq:b}
    \begin{aligned}
    b_t(x) &= \dot\beta_t \frac{\sum_{j=1}^J p_j m_j {\sf N}(x;\overline{m}_j(t),\overline{C}_j(t))}{\sum_{j=1}^J p_j {\sf N}(x;\overline{m}_j(t),\overline{C}_j(t))} \\
 &+ \frac{\sum_{j=1}^J p_j (\beta_t\dot\beta_t C_j + \alpha_t\dot\alpha_t I) \overline{C}_j^{-1}(t) (x-\overline{m}_j(t)) {\sf N}(x; \overline{m}_j(t),\overline{C}_j(t))}
 {\sum_{j=1}^J p_j {\sf N}(x;\overline{m}_j(t),\overline{C}_j(t))}
 \end{aligned}
\end{equation}
where
\begin{equation}
    \label{eq:m:C}
    \begin{aligned}
        \overline{m}_j(t) = \beta_t m_j, \qquad 
        \overline{C}_j(t) = \beta^2_t C_j + \alpha^2_t \mathrm{I}\, .
    \end{aligned}
\end{equation}
\end{proposition}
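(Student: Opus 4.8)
The plan is to compute $b_t(x) = \bE[\dot I_t \mid I_t = x] = \dot\alpha_t\,\bE[z \mid I_t = x] + \dot\beta_t\,\bE[x_1 \mid I_t = x]$ directly, by conditioning on which mixture component produced $x_1$. Introduce a latent label $Z \in \{1,\dots,J\}$ with $\bP(Z = j) = p_j$ and $x_1 \mid \{Z=j\} \sim \sfN(m_j, C_j)$, drawn independently of $z \sim \sfN(0,\mathrm{I})$. Conditionally on $\{Z=j\}$ the triple $(z, x_1, I_t)$ is jointly Gaussian, so the whole computation reduces to recording the relevant moments: $\bE[I_t \mid Z=j] = \beta_t m_j = \overline{m}_j(t)$, $\mathrm{Cov}(I_t \mid Z=j) = \alpha_t^2\mathrm{I} + \beta_t^2 C_j = \overline{C}_j(t)$, $\mathrm{Cov}(z, I_t \mid Z=j) = \alpha_t \mathrm{I}$, and $\mathrm{Cov}(x_1, I_t \mid Z=j) = \beta_t C_j$ (the last two because $z \perp x_1$).

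From these, the standard Gaussian conditioning formula gives the component-wise conditional expectations $\bE[z \mid I_t = x, Z = j] = \alpha_t \overline{C}_j(t)^{-1}(x - \overline{m}_j(t))$ and $\bE[x_1 \mid I_t = x, Z = j] = m_j + \beta_t C_j \overline{C}_j(t)^{-1}(x - \overline{m}_j(t))$. Since $I_t \mid \{Z=j\}$ is Gaussian, the density of $I_t$ is the mixture $\rho_t(x) = \sum_j p_j \sfN(x; \overline{m}_j(t), \overline{C}_j(t))$, and Bayes' rule yields the posterior weights $w_j(t,x) = p_j \sfN(x; \overline{m}_j(t), \overline{C}_j(t))/\rho_t(x)$. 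The tower property then gives $\bE[z \mid I_t = x] = \sum_j w_j(t,x)\,\bE[z \mid I_t = x, Z = j]$ and similarly for $x_1$.

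Substituting into $b_t(x) = \dot\alpha_t\,\bE[z \mid I_t = x] + \dot\beta_t\,\bE[x_1 \mid I_t = x]$, I would collect the $\dot\beta_t m_j$ pieces into the first sum of \eqref{eq:b} and group the $(\beta_t\dot\beta_t C_j + \alpha_t\dot\alpha_t\mathrm{I})\overline{C}_j(t)^{-1}(x - \overline{m}_j(t))$ pieces into the second, clearing the weights $w_j$ by putting $\rho_t(x) = \sum_k p_k \sfN(x;\overline{m}_k(t),\overline{C}_k(t))$ into the common denominator; this is precisely \eqref{eq:b}, with $\overline{m}_j(t),\overline{C}_j(t)$ as in \eqref{eq:m:C}. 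The argument is routine Gaussian algebra, so the only real obstacle is organizational: correctly bookkeeping the cross-covariances $\mathrm{Cov}(z, I_t)$, $\mathrm{Cov}(x_1, I_t)$ and invoking the fact that the conditional expectation of a mixture is the posterior-weighted mixture of the conditional expectations. An alternative route would be to start from the affine relation \eqref{eqn-b-score} between $b_t$ and $\nabla\log\rho_t$ and differentiate $\log\rho_t$ for the explicit mixture $\rho_t$, but reconciling that form with \eqref{eq:b} is messier, so I would prefer the conditioning argument.
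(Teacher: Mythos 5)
Your proof is correct, but it takes a genuinely different route from the paper's. The paper starts from the already-established affine identity $b_t(x) = \dot\beta_t\beta_t^{-1}x + \alpha_t(\alpha_t\dot\beta_t\beta_t^{-1}-\dot\alpha_t)\nabla\log\rho_t(x)$ (equation \eqref{eqn-b-score}, specialized), computes the score of the mixture density $\rho_t(x) = \sum_j p_j\sfN(x;\overline m_j,\overline C_j)$ explicitly, and then substitutes, massaging the result with the identity $\alpha_t^2\overline C_j^{-1} = \mathrm{I} - \beta_t^2 C_j\overline C_j^{-1}$ to reach \eqref{eq:b}. You instead introduce the latent component label $Z$, note that conditionally on $Z=j$ the pair $(z,x_1,I_t)$ is jointly Gaussian with the moments you record, apply Gaussian conditioning to obtain $\bE[z\mid I_t=x,Z=j]$ and $\bE[x_1\mid I_t=x,Z=j]$, and then combine with the posterior weights $w_j(t,x)$ via the tower property. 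Both are correct; your conditioning argument is structurally more transparent (it computes the two conditional expectations directly rather than reconstructing them through the score) and avoids the intermediate algebraic simplification, whereas the paper's route leans on machinery already developed (\eqref{eqn-stein}, \eqref{eqn-b-score}) and therefore requires less new bookkeeping in context. You also correctly identified the paper's approach as the ``messier alternative route'' at the end of your sketch — that is indeed what the paper does.
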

\begin{proof}
By definition
    \begin{equation}
    \label{eq:b:gmm:a}
\begin{aligned}
    b_t(x) & = \bE[\dot\beta_t x_1 + \dot\alpha_t z|I_t=x] \\
    &= \bE[\dot\beta_t\beta_t^{-1}( x-\alpha_t z) + \dot\alpha_t z|I_t=x]\\
    & = \dot\beta_t\beta^{-1}_t x + \alpha_t(\alpha_t\dot\beta_t\beta_t^{-1}-\dot\alpha_t)\nabla \log \rho_t(x)\, .
\end{aligned}
\end{equation}
where we used the fact $\nabla \log \rho_t(x) = - \alpha_t^{-1}\bE[z|I_t=x]$. For the GMM, 
\begin{equation}
    \rho_t(x) = \sum_{j=1}^J p_j {\sf N}(x; \overline{m}_j(t), \overline{C}_j(t))\, ,
\end{equation}
so that 
\begin{equation}
    \nabla \log \rho_t(x) = -\frac{\sum_{j=1}^J p_j \overline{C}_j^{-1}(t)(x-\overline{m}_j(t)) {\sf N}(x;\overline{m}_j(t),\overline{C}_j(t))}{\sum_{j=1}^J p_j {\sf N}(x;\overline{m}_j(t),\overline{C}_j(t))}\, .
\end{equation}
Inserting this expression in \eqref{eq:b:gmm:a} we obtain
\begin{equation}
    \begin{aligned}
        &\frac{\dot\beta_t}{\beta_t} x + \alpha_t^2 \frac{\dot\beta_t}{\beta_t}\nabla \log \rho_t(x)\\
        =& \frac{\dot\beta_t}{\beta_t}\left(x - \frac{\sum_{j=1}^J p_j (I - \beta^2_tC_j\overline{C}_j^{-1}(t))(x-\overline{m}_j(t)) {\sf N}(x;\overline{m}_j(t),\overline{C}_j(t))}{\sum_{j=1}^J p_j {\sf N}(x;\overline{m}_j(t),\overline{C}_j(t))}\right)\\
        =&  \frac{\dot\beta_t}{\beta_t}\left(\frac{\sum_{j=1}^J p_j \big(\beta_t m_j +  \beta^2_t C_j\overline{C}_j^{-1}(x-\overline{m}_j)\big) {\sf N}(x;\overline{m}_j(t),\overline{C}_j(t))}{\sum_{j=1}^J p_j {\sf N}(x;\overline{m}_j(t),\overline{C}_j(t))}\right)\\
        =&  \dot\beta_t \frac{\sum_{j=1}^J p_j m_j {\sf N}(x;\overline{m}_j(t),\overline{C}_j(t))}{\sum_{j=1}^J p_j {\sf N}(x;\overline{m}_j(t),\overline{C}_j(t))} + \frac{\sum_{j=1}^J p_j \beta_t \dot\beta_t C_j\overline{C}_j^{-1}(x-\bar{m}_j){\sf N}(x;\overline{m}_j(t),\overline{C}_j(t))}{\sum_{j=1}^J p_j {\sf N}(x;\overline{m}_j(t),\overline{C}_j(t))}\, ,
    \end{aligned}
\end{equation}
where in the first and second identities, we used the fact that $\alpha^2_t\overline{C}_j^{-1}(t)  = \mathrm{I} - \beta^2_tC_j\overline{C}_j^{-1}(t)$.

Now, using $b_t(x) = \dot\beta_t\beta_t^{-1}x + \alpha_t^2(\dot\beta_t\beta_t^{-1}-\dot\alpha_t)\nabla \log \rho_t(x)$, we get the final formula.
\end{proof}
\begin{newremark}
    This form of the formula holds generally when $z$ is not of unit covariance. Let $z \sim \sfN(0, C_0)$, then we have
    \begin{equation}
    \begin{aligned}
    b_t(x) &= \dot\beta_t \frac{\sum_{j=1}^J p_j m_j {\sf N}(x;\overline{m}_j(t),\overline{C}_j(t))}{\sum_{j=1}^J p_j {\sf N}(x;\overline{m}_j(t),\overline{C}_j(t))} \\
 &+ \frac{\sum_{j=1}^J p_j (\beta_t\dot\beta_t C_j + \alpha_t\dot\alpha_t C_0) \overline{C}_j^{-1}(t) (x-\overline{m}_j(t)) {\sf N}(x; \overline{m}_j(t),\overline{C}_j(t))}
 {\sum_{j=1}^J p_j {\sf N}(x;\overline{m}_j(t),\overline{C}_j(t))}
 \end{aligned}
\end{equation}
where
\begin{equation}
    \begin{aligned}
        \overline{m}_j(t) = \beta_t m_j, \qquad 
        \overline{C}_j(t) = \beta^2_t C_j + \alpha^2_t C_0\,  .
    \end{aligned}
\end{equation}
When there is only one mode, we get
\[ b_t(x) = \dot{\beta}_t m_1 + (\alpha_t \dot{\alpha}_t C_0 + \beta_t \dot{\beta}_t M)(\alpha_t^2 C_0 + \beta^2_t M)^{-1}(x-\beta_t m_1)\, , \]
which matches the formula in the Gaussian setting before ($m_1 = 0$).
\end{newremark}

\begin{newremark}
    \label{prop-gm-1D-details}

    Consider the 1D bimodal case  \[\mu^*(x) = p\sfN(x;M,1) + (1-p)\sfN(x;-M,1)\, . \] For general $\alpha_t, \beta_t$, using the formula in Proposition \ref{prop-GMM-general-formula}, we have
\begin{equation}
    \begin{aligned}
    b_t(x) &= \dot\beta_t \frac{ p M {\sf N}(x;\beta_t M,\beta_t^2+\alpha_t^2) - (1-p)M{\sf N}(x;-\beta_t M,\beta_t^2+\alpha_t^2)}{p {\sf N}(x;\beta_t M,\beta_t^2+\alpha_t^2) + (1-p){\sf N}(x;-\beta_t M,\beta_t^2+\alpha_t^2)} \\
 &+ (\beta_t\dot\beta_t + \alpha_t\dot\alpha_t)(\beta_t^2+\alpha_t^2)^{-1} \frac{p(x-\beta_t M){\sf N}(x;\beta_t M,\beta_t^2+\alpha_t^2) + (1-p)(x+\beta_t M){\sf N}(x;\beta_t M,\beta_t^2+\alpha_t^2)}
 {p {\sf N}(x;\beta_t M,\beta_t^2+\alpha_t^2) + (1-p){\sf N}(x;-\beta_t M,\beta_t^2+\alpha_t^2)}\, .
 \end{aligned}
\end{equation}
Taking $\alpha_t = \sqrt{1-\beta_t^2}$ leads to a simplified formula
\begin{equation*}
    \begin{aligned}
        b_t(x) &= \dot\beta_t \frac{ p M {\sf N}(x;\beta_t M,\beta_t^2+\alpha_t^2) - (1-p)M{\sf N}(x;-\beta_t M,\beta_t^2+\alpha_t^2)}{p {\sf N}(x;\beta_t M,\beta_t^2+\alpha_t^2) + (1-p){\sf N}(x;-\beta_t M,\beta_t^2+\alpha_t^2)} \\
        & = \dot\beta_t M \frac{p\exp(2\beta_t Mx)-(1-p)}{p\exp(2\beta_t Mx)+(1-p)} = \dot\beta_t M \mathrm{tanh}(h+\beta_t M x)
    \end{aligned}
\end{equation*}
where $h$ satisfies $\frac{p}{1-p}=\exp(2h)$ or $p = \frac{\exp(h)}{\exp(h)+\exp(-h)}$.

Moreover, for the $d$ dimensional bimodal Gaussian mixture
\[\mu^*(x) = p\sfN(x;r, \mathrm{I}) + (1-p)\sfN(x;-r, \mathrm{I})\, , \]
a similar calculation implies $b_t(x) = \dot\beta_t r \mathrm{tanh}(h+\beta_t  \langle r, x\rangle)$.
\end{newremark}
\subsection{Optimizing avg-Lip$^2$ for 1D Gaussian mixtures}
\label{sec-proof-min-lip-gmm}
\begin{proof}[Proof for Proposition \ref{min-lip 1d gaussian mixture}]
Using the formula in \eqref{eqn-1dGMM-bt}, we have $\nabla b_t (x) = M^2 \dot\beta_t\beta_t \operatorname{sech}^2(h+\beta_tMx)$ and
\begin{equation}
    A_2 = \int_0^1 \bE[\|\nabla b_t(I_t)\|_2^2]\,  {\rm d}t = M^4 \int_0^1 \bE[\dot{\beta}_t^2\, \beta_t^2\, \operatorname{sech}^4(h+\beta_t\, M I_t)] {\rm d}t\, .
\end{equation}
We denote $G(u) = \bE[\operatorname{sech}^4(h+uM (\sqrt{1-u^2}z + ux_1))]$, so $A_2 = M^4 \int_0^1 \dot\beta_t^2 \beta_t^2 G(\beta_t) {\rm d}t$. The Euler-Lagrange equation satisfies
\[ \frac{{\rm d}}{{\rm d}t}\frac{\partial}{\partial \dot\beta_t} (\dot\beta_t^2 \beta_t^2 G(\beta_t)) = \frac{\partial }{\partial \beta_t} (\dot\beta_t^2 \beta_t^2 G(\beta_t))\, . \]
Using the Beltrami Identity, the equation leads to
\[\dot\beta_t^2 \beta_t^2 G(\beta_t) - \dot\beta_t \frac{\partial}{\partial \dot\beta_t}(\dot\beta_t^2 \beta_t^2 G(\beta_t)) = \mathrm{const}\, , \]
which implies $\dot\beta_t^2 \beta_t^2 G(\beta_t) = \mathrm{const}$ and thus $\dot\beta_t \beta_t (G(\beta_t))^{1/2} = \mathrm{const}$. Integrating both sides leads to the solution
\[ t = \frac{\int_0^{\beta_t} u (G(u))^{1/2} {\rm d}u}{\int_0^1 u (G(u))^{1/2} {\rm d}u}\, . \]

Now, we derive the ODE that $\beta_t$ satisfies. To do so, we need to write out the integral over space explicitly.
The density of $I_t$ satisfies
\begin{equation*}
    \rho_t(x) = p\sfN(x;\beta_tM,1) + (1-p)\sfN(x;-\beta_tM,1)\ = \frac{1}{\sqrt{2\pi}} \exp(-\frac{x^2+\beta_t^2M^2}{2}) \frac{\operatorname{cosh}(h+\beta_tMx)}{\operatorname{cosh}(h)}\, .
\end{equation*}
Let us denote $\rho_t(x) = \rho(\beta_t,x)$ in this proof, which allows us to write
\begin{equation}
    A_2 = M^4\int_0^1 \int_{\bR} L(\dot\beta_t, \beta_t, x) \rho(\beta_t, x){\rm d}x {\rm d}t\, ,
\end{equation}
where $L(\dot\beta_t, \beta_t, x) = \dot{\beta}_t^2\, \beta_t^2\, \operatorname{sech}^4(h+\beta_t\, Mx)$. The Euler-Lagrange equation for this problem has the form
\[ \int_{\bR} \left(\frac{\rm d}{{\rm d}t} \frac{\partial}{\partial \dot\beta_t} (L(\dot\beta_t, \beta_t, x) \rho(\beta_t, x))\right) {\rm d}x = \int_{\bR} \left(\frac{\partial}{\partial \beta_t} (L(\dot\beta_t, \beta_t, x) \rho(\beta_t, x))\right) {\rm d}x\, . \]
We organize the equation according to $\rho$, which leads to
\begin{equation}
\label{eqn-proof-1DGMM-EL}
    \int_{\bR} (\frac{\partial}{\partial \beta_t} L - \frac{\rm d}{{\rm d}t}\frac{\partial}{\partial \dot\beta_t }L) \rho  {\rm d}x = \int_{\bR} (\frac{\partial}{\partial \dot\beta_t} L \frac{\rm d}{{\rm d}t}\rho - L\frac{\partial}{\partial \beta_t}\rho){\rm d}x  \, ,
\end{equation}
where we omit the arguments for simplicity of notation.

We have
\begin{equation*}
    \begin{aligned}
        &\frac{\partial}{\partial \beta_t} L = 2\dot\beta_t^2\beta_t\operatorname{sech}^4(h+\beta_tMx) - 4Mx\dot\beta_t^2\beta_t^2 \operatorname{sech}^4(h+\beta_tMx) \operatorname{tanh}(h+\beta_tMx)\\
        &\frac{\partial }{\partial \dot\beta_t} L = 2\dot\beta_t \beta_t^2 \operatorname{sech}^4(h+\beta_tMx)\\
        & \frac{\rm d}{{\rm d}t}\frac{\partial}{\partial \dot\beta_t }L = (2\ddot\beta_t\beta_t^2 +4\dot\beta_t^2\beta_t)\operatorname{sech}^4(h+\beta_tMx) - 8Mx\dot\beta_t^2\beta_t^2 \operatorname{sech}^4(h+\beta_tMx) \operatorname{tanh}(h+\beta_tMx)\\
        &\frac{\rm d}{{\rm d}t}{\rho} = \dot\beta_t \frac{\partial}{\partial \beta_t} \rho = \dot\beta_t (-\beta_tM^2 + Mx \operatorname{tanh}(h+\beta_tMx))\rho
    \end{aligned}
\end{equation*}
which shows that the left and right hand sides of \eqref{eqn-proof-1DGMM-EL} are
\begin{equation*}
    \textrm{LHS} = \int_{\bR} \operatorname{sech}^4(h+\beta_tMx))\left(-2\dot\beta_t^2\beta_t-2\ddot\beta_t\beta_t^2+4Mx\dot\beta_t^2\beta_t^2 \operatorname{tanh}(h+\beta_tMx)\right) \rho {\rm d}x\, ,
\end{equation*}

\begin{equation*}
\begin{aligned}
    \textrm{RHS} & = \int_{\bR} ((\frac{\partial}{\partial \dot\beta_t} L) \dot\beta_t - L)\frac{\partial}{\partial \beta_t} \rho {\rm d}x = \int_{\bR}\left((2\dot\beta_t \beta_t^2 \operatorname{sech}^4(h+\beta_tMx))\dot\beta_t - L\right) \frac{\partial}{\partial \beta_t} \rho {\rm d}x \\
    & = \int_{\bR} \dot\beta_t^2 \beta_t^2 \operatorname{sech}^4(h+\beta_tMx)) \left(-\beta_tM^2 + Mx \operatorname{tanh}(h+\beta_tMx)\right)\rho {\rm d}x\, .
\end{aligned}
\end{equation*}
Since LHS $=$ RHS, we get
\[\bE[\left(-2\dot\beta_t^2\beta_t-2\ddot\beta_t\beta_t^2 + \dot\beta^2_t\beta^3_tM^2 +3 \dot\beta_t^2\beta_t^2 MI_t\operatorname{tanh}(h+\beta_tMI_t)\right)\operatorname{sech}^4(h+\beta_tMI_t)]=0\, .  \]
Now, we note the fact that $\bE[x\operatorname{tanh}(h+\beta_tMI_t)] = \beta_t M$ since
 \begin{equation*}
        \begin{aligned}
            \mathbb{E}[I_t\tanh(h+\beta_tMI_t)] &= \frac{1}{\sqrt{2\pi}}\int_{\bR} e^{-\frac{x^2+\beta_t^2M^2}{2}}\frac{\cosh(h+\beta_tMx)}{\cosh(h)}\tanh(h+\beta_tMx) x {\rm d}x
            \\
            &= \frac{1}{\sqrt{2\pi}\cosh(h)} \int_{\bR} e^{-\frac{x^2+\beta_t^2M^2} {2}}\sinh(h+\beta_tMx)x {\rm d}x
            \\
            &= \frac{1}{2\sqrt{2\pi}\cosh(h)} \int_{\bR} (e^{h}\, e^{-\frac{(x-\beta_tM)^2}{2}} - e^{-h} \, e^{-\frac{(x+\beta_tM)^2}{2}})x {\rm d}x
            \\
            &= \frac{1}{\sqrt{2\pi}} \cdot \int_{\bR} (\frac{e^h}{e^h +e^{-h}} \,e^{-\frac{(x-\beta_tM)^2}{2}} - \frac{e^{-h}}{e^h +e^{-h}} \,e^{-\frac{(x+\beta_tM)^2}{2}})x {\rm d}x
            \\
            &= \frac{e^h}{e^h +e^{-h}}\beta_t M + \frac{e^{-h}}{e^h +e^{-h}}\beta_tM = \beta_t M\, .
        \end{aligned}
    \end{equation*}
    Thus, we have 
\begin{equation*}
    \begin{aligned}
        &\bE[I_t\operatorname{tanh}(h+\beta_tMI_t)\operatorname{sech}^4(h+mI_t)]\\
        =&\operatorname{Cov}(I_t\operatorname{tanh}(h+\beta_tMI_t), \operatorname{sech}^4(h+\beta_tMI_t))
 + \bE[I_t\operatorname{tanh}(h+\beta_tMI_t)]\bE[\operatorname{sech}^4(h+\beta_tMI_t)]\\
 =& \operatorname{Cov}(I_t\operatorname{tanh}(h+\beta_tMI_t), \operatorname{sech}^4(h+\beta_tMI_t)) + \beta_t M \bE[\operatorname{sech}^4(h+\beta_tMI_t)]\, .
    \end{aligned}
\end{equation*}
With these formulas, the Euler-Lagrange equation becomes
\[-2\dot\beta_t^2\beta_t-2\ddot\beta_t\beta_t^2 + \dot\beta_t^2\beta_t^3 M^2(4+3\operatorname{Corr}(I_t\operatorname{tanh}(h+\beta_tMI_t), \operatorname{sech}^4(h+\beta_tMI_t)))=0 \, .  \]
If we omit the Corr term, we get the ODE
\[ \dot\beta_t^2 \beta_t - \ddot\beta_t\beta_t^2 + 2\dot\beta_t^2\beta_t^2 M^2 = 0 \, .\]
By setting $f_t = \beta^2_t$, the above ODE becomes $\ddot f_t = M^2 \dot f_t$. Solving this ODE with the correct boundary condition leads to
\[\beta_t = \frac{1}{M}\sqrt{-\log(-M^2t + \frac{M^2}{1-e^{-M^2}}) + \log \frac{M^2}{1-e^{-M^2}} }\, , \]
which can be simplified as $\beta_t = \frac{1}{M}\sqrt{-\log(1+(e^{-M^2}-1)t)} $. 

On the other hand, we note that if we optimize $\int_0^1 \mathbb{E}[\|\nabla b_t(I_t)\|^{2k}_2]\,\mathrm{d}t$, we will get
\[-\dot\beta_t^2\beta_t-\ddot\beta_t\beta_t^2 + \dot\beta_t^2\beta_t^3 2M^2\left(1+\frac{8k^2-6k+1}{8k^2-4k}\operatorname{Corr}(I_t\operatorname{tanh}(h+\beta_tMI_t), \operatorname{sech}^{4k}(h+\beta_tMI_t))\right)=0 \, .  \]
Omitting the correlation part leads to the same equation. Also, using the argument at the beginning of this proof, we have in such case
\[ t = \frac{\int_0^{\beta_t} u (G(u))^{1/{2k}} {\rm d}u}{\int_0^1 u (G(u))^{1/{2k}} {\rm d}u}\, , \]
where $G(u) = \bE[\operatorname{sech}^{4k}(h+uM (\sqrt{1-u^2}z + ux_1))]$.

\end{proof}

\section{Experimental Details for Navier--Stokes}
\label{appendix-ns-experiment-details}

We give full details of the data, network, training, sampling and evaluation pipelines underlying the results in Section~\ref{sec-numerical-ns}.

\medskip\noindent\textbf{Data generation and processing.}
We integrate the stochastically forced Navier--Stokes vorticity equation \eqref{eq:2D_vorticity_NS} on the torus $\mathbb{T}^2 = [0, 2\pi]^2$ using a pseudo-spectral solver with explicit time stepping and standard $2/3$ de-aliasing.
Parameters are $\nu = 10^{-3}$, $\alpha = 0.1$, and $\varepsilon = 1$ as in the main text.
Trajectories are run on a $256\times 256$ spatial grid; after a long burn-in to reach the invariant regime, vorticity snapshots are saved at fixed time intervals.
Five independent trajectories produce roughly $10^5$ snapshots in total.
Each snapshot is normalized by a fixed empirical per-pixel norm computed once over the union of trajectories, so the resulting fields have unit standard deviation.
At evaluation resolution $64\times 64$, snapshots are downsampled by bilinear interpolation; at $128\times 128$, snapshots are likewise interpolated from the native $256\times 256$ grid.
We split the dataset into $90\%$ train and $10\%$ test.

\medskip\noindent\textbf{Stochastic interpolant setup.}
The source $z$ is a spatial Gaussian random field, sampled independently per training step.
The target $x_1$ is a vorticity snapshot drawn from the training set.
We use the linear interpolant $I_t = (1-t) z + t\, x_1$ for training and learn the conditional drift $\bE[\dot I_t \mid I_t]$ by minimizing the squared loss summed over channels and pixels and averaged over the batch and time.
Times $t$ are sampled from a uniform distribution on $[10^{-3},\, 1 - 10^{-3}]$.

\medskip\noindent\textbf{Network architecture.}
The drift field is parameterized by a UNet \cite{ho2020denoising}.
We use base channels $32$ with channel multipliers $(1, 2, 2, 2)$, ResNet block groups of size $8$, and four downsampling/upsampling stages.
Time conditioning uses a learned sinusoidal position embedding of dimension $32$.
Attention is applied at the lower-resolution stages with $4$ heads and head dimension $32$.
The model has approximately $2{,}060{,}000$ trainable parameters.
The same architecture is used at resolutions $64\times 64$ and $128\times 128$, with input/output channel count one (vorticity is a scalar field).
No class conditioning is used.

\medskip\noindent\textbf{Training.}
The model is trained with AdamW at base learning rate $10^{-4}$, batch size $100$, for $50{,}000$ gradient steps.
The learning rate follows a cosine annealing schedule.
Gradients are clipped at norm $10^4$.
Training is performed on a single GPU with mixed precision disabled.

\medskip\noindent\textbf{Sampling.}
At inference time the ODE $\dot z = b_t(z)$ is integrated by a fixed-step fourth-order Runge--Kutta method on a uniform grid in $[t_{\min}, t_{\max}] = [10^{-3}, 1 - 10^{-3}]$.
For an integration with $N$ grid points, $N - 1$ RK4 steps of equal size are taken.
For the designed schedule, the drift is computed by the transfer formula derived in Section~\ref{sec-numerical-design}, which expresses the designed-schedule drift in terms of the trained linear-schedule drift and so requires no retraining.
For each (schedule, step count, seed) triple we generate $500$ samples from a fixed batch of initial conditions, and we use the same initial conditions across schedules within a seed for paired comparison.

\medskip\noindent\textbf{Choice of $\lambda^\star$.}
The schedule parameter $\lambda^\star$ in \eqref{eqn-high-D-gaussian-alpha-beta} is selected automatically as in \eqref{eq:auto-lambda-ns}: the ratio of data to noise enstrophy spectra at the highest resolved wavenumber $k_{\max}$, computed once from the training set.
This produces $\lambda^\star \approx 3\times 10^{-4}$ at $64\times 64$ and $\lambda^\star \approx 10^{-5}$ at $128\times 128$.
The procedure requires no manual tuning.

\medskip\noindent\textbf{Spectra and band errors.}
For a 2D field $\omega$ of side $N$, we compute $\hat\omega(\mathbf{m}) = \mathcal{F}\omega$ and the radially-averaged enstrophy spectrum
\begin{equation*}
    S(k) \;=\; \pi (k_+^2 - k_-^2) \cdot \mathrm{mean}\big\{|\hat\omega(\mathbf{m})|^2 \;:\; |\mathbf{m}| \in [k_-,\, k_+)\big\},
\end{equation*}
where $(k_-, k_+) = (k - 0.5, k + 0.5)$ and $k$ ranges over integer wavenumbers from $1$ to $N/2$.
Spectra are averaged over the ensemble of generated samples before being compared with the test-set spectrum.
Per-band relative errors are computed as the unweighted average of $|S_{\rm gen}(k) - S_{\rm truth}(k)|/|S_{\rm truth}(k)|$ over wavenumbers $k$ in each band; band cuts are $k < 8$, $8 \le k < 24$, $k \ge 24$.

\medskip\noindent\textbf{Non-Gaussian metrics.}
For a snapshot $\omega$ on the spatial lattice, structure functions are computed as
\begin{equation*}
    S_p(r) = \tfrac{1}{2}\,\bE\!\left[|\omega(\cdot + r e_1) - \omega(\cdot)|^p\right] + \tfrac{1}{2}\,\bE\!\left[|\omega(\cdot + r e_2) - \omega(\cdot)|^p\right],
\end{equation*}
averaging over both axes and all valid translations.
The flatness is $F(r) = S_4(r)/S_2(r)^2$, with Gaussian baseline $3$.
The gradient kurtosis equals $F(1)$ (both expectations taken over the pixel-difference distribution).
The Kolmogorov--Smirnov distance is computed between $10^5$ randomly sampled pixel values from the truth and the same number from the generated ensemble using the standard two-sample formula.
All non-Gaussian statistics in Table~\ref{tab:ns-nongaussian} are reported as mean and standard deviation across five independent random seeds.

\end{document}